\tikzset{
    -Latex,auto,node distance =1 cm and 1 cm,semithick,
    state/.style ={ellipse, draw, minimum width = 0.7 cm},
    point/.style = {circle, draw, inner sep=0.04cm,fill,node contents={}},
    square/.style = {rectangle, draw, inner sep=0.04cm,fill,node contents={}},
    bidirected/.style={Latex-Latex,dashed},
    el/.style = {inner sep=2pt, align=left, sloped}
}
\newcommand{\ci}{\perp\!\!\!\perp}
\newcommand{\intersect}{\cap}
\newtheorem{theorem}{Theorem}
\newtheorem{definition}{Definition}
\newtheorem{lemma}{Lemma}
\newtheorem{corollary}{Corollary}
\newtheorem{prop}{Proposition}
\newenvironment{customthm}[1]
  {\innercustomthm}
  {\endinnercustomthm}
\newenvironment{customlemma}[1]
  {\innercustomlemma}
  {\endinnercustomlemma}
\DeclareMathOperator*{\argmin}{\arg\!\min}
\begin{document}

%
\runningtitle{Learning Predictive Models That Transport}

%

\twocolumn[

\aistatstitle{Preventing Failures Due to Dataset Shift:\\ Learning Predictive Models That Transport}

\aistatsauthor{ Adarsh Subbaswamy \And Peter Schulam \And  Suchi Saria }

\aistatsaddress{ Johns Hopkins University \And  Johns Hopkins University \And Johns Hopkins University } ]

\begin{abstract}
Classical supervised learning produces unreliable models when training and target distributions differ, with most existing solutions requiring samples from the target domain. We propose a proactive approach which learns a relationship in the training domain that will generalize to the target domain by incorporating prior knowledge of aspects of the data generating process that are expected to differ as expressed in a causal selection diagram. Specifically, we remove variables generated by unstable mechanisms from the joint factorization to yield the Surgery Estimator---an interventional distribution that is invariant to the differences across environments. We prove that the surgery estimator finds stable relationships in strictly more scenarios than previous approaches which only consider conditional relationships, and demonstrate this in simulated experiments. We also evaluate on real world data for which the true causal diagram is unknown, performing competitively against entirely data-driven approaches.
\end{abstract}

\section{INTRODUCTION}
As machine learning systems are increasingly deployed in practice, system developers are being faced with deployment environments that systematically differ from the training environment. However, models are typically evaluated by splitting a single dataset into train and test subsets such that training and evaluation data are, by default, drawn from the same distribution. When evaluated beyond this initial dataset, say in the deployment environment, model performance may significantly deteriorate and potentially cause harm in safety-critical applications such as healthcare (see e.g., \citet{schulam2017reliable,zech2018variable}). Because access to deployment environment data may not be available during training, it is not always feasible to employ \emph{domain adaptation} techniques to directly optimize the model for the target domain. This motivates the need for \emph{proactive} approaches which anticipate and address the differences between training and deployment environments without using deployment data \citep{subbaswamycounterfactual}. As a step towards building more reliable systems, in this paper we address the problem of proactively training models that are robust to expected changes in environment. 

In order to ensure reliability, we must first be able to identify the sources of the changes. One way to do this is to reason about the differences in the underlying data generating processes (DGPs) that produce the data. For example, suppose we wish to diagnose a target condition $T$, say lung cancer, using information about patient chest pain symptoms $C$ and whether or not they take aspirin $A$. From our prior knowledge of the DGP we know that lung cancer leads to chest pain and that aspirin can relieve chest pain. We also know that smoking $K$ (unrecorded) is a risk factor for both lung cancer and heart disease, and aspirin is prescribed to smokers as a result. A diagnostic tool for this problem will be trained from one dataset before being deployed in hospitals that may not be represented in the data. Still, a modeler can reason about which aspects of the DGP are likely to differ across hospitals. For example, while the effects of lung cancer or aspirin on chest pain will not vary across hospitals, the policy used to prescribe aspirin to smokers (i.e., $P(A|K)$) is practice dependent and will vary.

What can the modeler do after identifying potential sources of unreliability in the data? Because the modeler does not know which prescription policies will be in place at deployment locations or by how much the deployment DGP will differ from the training DGP, the modeler should design the system to be \emph{stable} (i.e., invariant) to the differences in prescription policy. This means the model should predict using only the pieces of the DGP that are expected to stay the same across environments while not learning relationships that make use of the varying parts of the DGP. If the model predictions somehow depend on the prescription policy, then when the deployment policy strongly differs from the training policy model performance will significantly degrade and aspirin-taking subpopulations will be systematically misclassified.

To ensure that a model does not make use of unreliable relationships in its predictions (i.e., relationships involving prescription policy), it helps to have a representation of the DGP that makes explicit our assumptions about the DGP and what we expect will vary. A natural representation is to use \emph{selection diagrams} \citep{pearl2011transportability}, which consist of two types of nodes: nodes representing variables relevant to the DGP (e.g., smoking $K$ or lung cancer $T$) and auxiliary \emph{selection variables} (denoted by square $S$ nodes) which identify sources of unreliability in the DGP. For example, the selection diagram in Figure \ref{fig:diagnosis}a represents the DGP underlying the diagnosis example, with the selection variable $S$ pointing to the piece of the DGP we expect to vary: the aspirin prescription policy $P(A|K)$. The selection variables point to \emph{mutable} variables that are generated by mechanisms that are expected to differ across environments such that a selection diagram represents a family of DGPs which differ only in the mechanisms that generate the mutable variables.

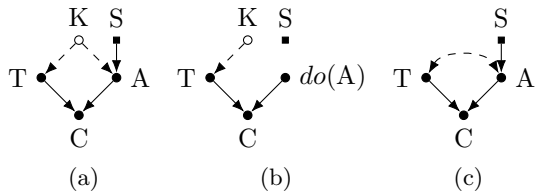
\begin{figure}[!t]
\vspace{-0.2in}
\centering
     \subfloat[]{%
      \begin{tikzpicture}
          \def\unit{0.5}
            \node[fill=black!0] (k) at (0,0) [label=above:K,point];
            \node (a) at (\unit, -\unit) [label=right:A,point];
            \node (t) at (-\unit,  -\unit) [label=left:T,point];
            \node (c) at (0, -2*\unit) [label=below:C,point];
            \node (s) at (\unit, 0) [label=above:S,square];
            \path[dashed] (k) edge (t);
            \path[dashed] (k) edge (a);
            \path (t) edge (c);
            \path (a) edge (c);
            \path (s) edge (a);
        \end{tikzpicture}
     }
     \subfloat[]{%
      \begin{tikzpicture}
            \def\unit{0.5}
            \node[fill=black!0] (k) at (0,0) [label=above:K,point];
            \node (a) at (\unit, -\unit) [label=right:$do$(A),point];
            \node (t) at (-\unit,  -\unit) [label=left:T,point];
            \node (c) at (0, -2*\unit) [label=below:C,point];
            \node (s) at (\unit, 0) [label=above:S,square];
            \path[dashed] (k) edge (t);
            \path (t) edge (c);
            \path (a) edge (c);
        \end{tikzpicture}
     }
      \subfloat[]{%
      \begin{tikzpicture}
          \def\unit{0.5}
            \node (a) at (\unit, -\unit) [label=right:A,point];
            \node (t) at (-\unit,  -\unit) [label=left:T,point];
            \node (c) at (0, -2*\unit) [label=below:C,point];
            \node (s) at (\unit, 0) [label=above:S,square];
            \path[bidirected] (t) edge[bend left = 60] (a);
            \path (t) edge (c);
            \path (a) edge (c);
            \path (s) edge (a);
        \end{tikzpicture}
     }
\caption{(a) Selection diagram for the diagnosis example. $K$ is unobserved. (b) DAG after performing graph surgery. (c) The ADMG yielded by taking the latent projection of (a).}
\label{fig:diagnosis}
\vspace{-0.2in}
\end{figure}

Checking model stability graphically using selection diagrams is straightforward: a model is stable if its predictions are independent of the selection variables. If predictions are not independent of the selection variables then they depend on the environment-varying mechanisms that do not generalize. To illustrate, suppose smoking status $K$ is not recorded in the data (denoted by the dashed edges in Figure \ref{fig:diagnosis}a). A discriminative model of $P(T|A,C)$ which conditions on all recorded features will be dependent on the selection variable: $P(T|A,C) \not = P(T|A,C,S)$, which indicates this distribution is unstable (i.e., it differs by environment). One solution (see e.g., \citet{subbaswamycounterfactual,magliacane2018causal}), which we term \emph{graph pruning}, is to perform \emph{feature selection} to find a stable subset of features $\mathbf{Z}$ such that the conditional distribution transfers: $P(T|\mathbf{Z}) = P(T|\mathbf{Z}, S)$. However, for the problem in Figure \ref{fig:diagnosis}a, the only stable set is $\mathbf{Z} = \emptyset$ because by $d$-separation \citep{koller2009probabilistic} conditioning on either $A$ or $C$ activates the path $T \leftarrow K \rightarrow A \leftarrow S$, inducing a dependence between $T$ and $S$. Further, in cases where the mechanism that generates $T$ varies across environments (i.e., the \emph{target shift} scenario in which $S$ is a parent of $T$), no stable feature set exists. While without more assumptions or external data there is no stable discriminative model for such problems, in this paper we relax these limitations and propose an approach which can recover stable predictive relationships in cases where graph pruning fails. 

The proposed solution, which we term the \emph{Graph Surgery} estimator,\footnote{Henceforth graph surgery, surgery estimator, or surgery.} is to directly remove any possible dependence on environment-varying mechanisms by using \emph{interventional} \citep{pearl2009causality} rather than observational distributions to predict. Specifically, we consider a hypothetical intervention in which for each individual the mutable variables are set to the values they were actually observed to be (i.e., $do(A)$ in our example). Graphically, the intervention $do(A)$ results in a mutilated graph (Figure \ref{fig:diagnosis}b) in which the edges into $A$, including edges from selection variables, are ``surgically'' removed \citep{pearl1998graphical}. The resulting interventional distribution\footnote{We will use $p(Y|do(X))$ and $P_{X}(Y)$ interchangeably.}
\begin{align*}
    P_A(T|C) = P(T|C,do(A)) &\propto P(T,C|do(A)) \\
    &= P(T) P(C|T,A)
\end{align*}
is invariant to changes in how $A$ is generated, reflecting the ``independence of cause and mechanism'' \citep{peters2017elements}, ensuring stability, and allowing us to use information about $A$ and $C$ that the graph pruning solution ($P(T)$) does not.  Graph surgery can be seen as learning a predictor from an alternate, hypothetical DGP in which the mutable variables were generated by direct assignment rather than by the environment-specific mechanisms. This severs dependence on selection variables to yield a stable predictor. One challenge is that when the DAG contains hidden variables (as is common in reality), interventional distributions are not always uniquely expressible as a function of the observational training data \citep{pearl2009causality}. To address this we use the previously derived ID algorithm \citep{tian2002general,shpitser2006identification} for determining identifiability of interventional distributions.

\textbf{Contributions:} We propose the graph surgery estimator, an algorithm for estimating stable predictive models that can generalize even when train and test distributions differ. Graph surgery depends on a causal DAG to encode prior information about how the distribution of data might change. Given this prior information, it produces a predictor that does not depend on these unreliable parts of the data generating process. We show that graph surgery relaxes limiting assumptions made by existing methods for learning stable predictors. In addition, we connect the optimality of graph surgery to recently proposed adversarial distributional robustness problems.

\section{RELATED WORK}
Differences between training and test distributions have been previously studied as the problem of \emph{dataset shift} \citep{candela2009dataset}. Many specific forms of dataset shift have been characterized by dividing the variables into the input features and the target prediction outcome. By reasoning about the causal relationship between the inputs and target, various forms of dataset shift can be defined \citep{storkey2009training,scholkopf2012causal} which has led to methods for tackling specific instances such as \emph{covariate shift} (e.g., \citet{sugiyama2007covariate,gretton2009}), \emph{target shift} \citep{zhang2013domain,lipton2018detecting}, \emph{conditional shift} \citep{zhang2015multi, gong2016domain}, and \emph{policy shift} \citep{schulam2017reliable}. Using selection diagrams we can consider complex dataset shift scenarios beyond these two variable-type settings.

One issue is that methods for addressing dataset shift have mainly been \emph{reactive}: they make use of unlabeled data from the target domain to reweight training data during learning and optimize the model specifically for the target domain (e.g., \citet{storkey2009training,gretton2009}). However, if we do not have target domain data from every possible environment during learning, we must instead use \emph{proactive} approaches in which the target domain remains unspecified \citep{subbaswamycounterfactual,saria2019tutorial}.

One class of proactive solutions considers bounded \emph{distributional robustness}. These methods assume that the possible test distributions are in some way centered around the training distribution. For example, in adversarial learning \citet{sinha2018certifying} consider a Wasserstein ball around the training distribution. \citet{rothenhausler2018anchor} assume that differences between train and test distributions are bounded magnitude shift perturbations. However, these methods fail to give robustness guarantees on perturbations that are beyond the prespecified magnitude used during training. In safety-critical applications where preventing failures is crucial, we require unbounded invariance to perturbations which motivates the use of causal-based methods \citep{meinshausen2018causality}.

To achieve stable models with complete invariance to perturbations, \emph{graph pruning} methods consider a feature selection problem in which the goal is to find the optimal subset that makes the target independent from the selection variables. \citet{rojas2018invariant} and \citet{magliacane2018causal} accomplish this by empirically determining a stable conditioning set by hypothesis testing the stability of the set across multiple source domains and assuming that the target variable is not generated by a varying mechanism (no $S \rightarrow T$ edge). Extending this, \citet{subbaswamycounterfactual} consider also adding \emph{counterfactual} variables to stable conditioning sets which allow the model to make use of more stable information than by using observed variables alone. However, this requires the strong parametric assumption that causal mechanisms are linear. By using interventional distributions rather than counterfactuals, graph surgery is able to relax this assumption and nonparametrically use more stable information than observational conditional distributions. Additionally, graph surgery allows for the target to be generated by a varying mechanism.

\section{METHODS}
Our goal is to find a predictive distribution that generalizes even when train and test distributions differ. Derivation of the surgery estimator requires explicitly reasoning about the aspects of the DGP that can change and results in an interventional distribution in which the corresponding terms have been deleted from the factorization of the training distribution. In Section \ref{subsec:prelims} we introduce requisite prior work on identifying interventional distributions before presenting the surgery estimator in Section \ref{subsec:surgery} and establishing its soundness and completeness in Section \ref{subsec:sound}.

\subsection{Preliminaries}\label{subsec:prelims}
\textbf{Notation:} Throughout the paper sets of variables are denoted by bold capital letters while their particular assignments are denoted by bold lowercase letters. We will consider graphs with directed or bidirected edges (e.g., $\leftrightarrow$). Acyclic will be taken to mean that there exists no purely directed cycle. The sets of parents, children, ancestors, and descendants in a graph $\mathcal{G}$ will be denoted by $pa_\mathcal{G}(\cdot)$, $ch_\mathcal{G}(\cdot)$, $an_\mathcal{G}(\cdot)$, and $de_\mathcal{G}(\cdot)$, respectively. Our focus will be causal DAGs whose nodes can be partitioned into sets $\mathbf{O}$ of observed variables, $\mathbf{U}$ of unobserved variables, and $\mathbf{S}$ of selection variables. $\mathbf{O}$ and $\mathbf{U}$ consist of variables in the DGP, while $\mathbf{S}$ are auxiliary variables that denote mechanisms of the DGP that vary across environments.

\textbf{Interventional Distributions:} We now build up to the Identification (ID) algorithm
\citep{tian2002general,shpitser2006identification}, a sound and complete algorithm  \citep{shpitser2006identification} for determining whether or not an interventional distribution is identifiable, and if so, its form as a function of observational distributions. The ID algorithm operates on a special class of graphs known as \emph{acyclic directed mixed graph} (ADMGs). Any hidden variable DAG can be converted to an ADMG by taking its \emph{latent projection} onto $\mathbf{O}$ \citep{verma1991equivalence}. In the latent projection $\mathcal{G}'$ of a DAG $\mathcal{G}$ over observed variables $\mathbf{O}$, for $O_i, O_j \in \mathbf{O}$ there is an edge $O_i \rightarrow O_j$ if there exists a directed path from $O_i$ to $O_j$ in $\mathcal{G}$ where all internal nodes are unobserved, and $O_i \leftrightarrow O_j$ if there exists a divergent path from $O_i$ to $O_j$ (e.g., $O_i \leftarrow U \rightarrow O_j$) in $\mathcal{G}$ such that all internal nodes are unobserved. The bidirected edges represent \emph{unobserved confounding}. Figure \ref{fig:diagnosis}c shows the latent projection of the DAG in Figure \ref{fig:diagnosis}a. The joint distribution of an ADMG factorizes as:
\begin{equation}\label{eq:admg}
    P(\mathbf{O}) = \sum_{\mathbf{U}} \prod_{O_i \in \mathbf{O}} P(O_i|pa(O_i)) P(\mathbf{U}).
\end{equation}

An intervention on $\mathbf{X}=\mathbf{O}\setminus \mathbf{V}$ sets these variables to constants $do(\mathbf{x})$. As constants, $P(x|do(x))=1$ such that $\prod_{X_i\in\mathbf{X}}P(X_i|pa(X_i))$ are deleted from (\ref{eq:admg}) to yield the interventional distribution:
\begin{equation*}
    P_{\mathbf{X}}(\mathbf{V}) = \sum_\mathbf{U} \prod_{V_i \in \mathbf{V}} P(V_i|pa(V_i)) P(\mathbf{U}).
\end{equation*}
Graphically, the intervention results in the mutilated graph $\mathcal{G}_{\mathbf{\overline{X}}}$ in which the edges into  $\mathbf{X}$ have been removed.\footnote{Similarly, $\mathcal{G}_{\mathbf{\underline{X}}}$ will denote a mutilated graph in which edges out of $\mathbf{X}$ are removed.} When ADMG $\mathcal{G}$ contains bidirected edges, interventional distributions are not always \emph{identifiable}.
\begin{definition}[Causal Identifiability]
For disjoint variable sets $\mathbf{X},\mathbf{Y} \subseteq \mathbf{O}$, the effect of an intervention $do(\mathbf{x})$ on $\mathbf{Y}$ is said to be identifiable from $P$ in $\mathcal{G}$ if $P_{\mathbf{x}}(\mathbf{Y})$ is (uniquely) computable from $P(\mathbf{O})$ in any causal model which induces $\mathcal{G}$.
\end{definition}
The ID algorithm (a version of it is shown in Appendix A) determines if a particular interventional distribution is identified. Specifically, given disjoint variable sets $\mathbf{X},\mathbf{Y}\subseteq \mathbf{O}$ and an ADMG $\mathcal{G}$, a function call to ID$(
\mathbf{X},\mathbf{Y};\mathcal{G})$ returns an expression (in terms of $P(\mathbf{O})$) for $P_\mathbf{X}(\mathbf{Y})$ if it is identified, otherwise it throws a failure exception. The ID algorithm is nonparametric, so the terms in the expression it returns can be learned from training data with arbitrary black box approaches. 

\begin{algorithm}[!t]
 \SetKwInOut{Input}{input}\SetKwInOut{Output}{output}
 \Input{ADMG $\mathcal{G}$, disjoint variable sets $\mathbf{X,Y,Z\subset O}$}
 \Output{Unconditional query $\propto P_\mathbf{X}(\mathbf{Y}|\mathbf{Z})$.}
 $\mathbf{X}' = \mathbf{X}$; $\mathbf{Y}' = \mathbf{Y}$; $\mathbf{Z}' = \mathbf{Z}$\;
 \While{$\exists Z\in \mathbf{Z}$ s.t. $(\mathbf{Y} \ci Z|\mathbf{X},\mathbf{Z}\setminus\{Z\})_{\mathcal{G}_{\mathbf{\overline{X}},\underline{Z}}},$}{
    $\mathbf{X}' = \mathbf{X}' \cup Z$\;
    $\mathbf{Z}' = \mathbf{Z}' \setminus \{Z\}$\;
 }
 $\mathbf{Y}' = \mathbf{Y} \cup \mathbf{Z}'$\;
 \KwRet $\mathbf{X}', \mathbf{Y}'$ of unconditional query $P_{\mathbf{X}'}(\mathbf{Y}')$
  \caption{Unconditional Query: UQ($\mathbf{X}$, $\mathbf{Y}$, $\mathbf{Z}$; $\mathcal{G}$)}
  \label{alg:idc}
\end{algorithm}

In \citet{shpitser2006idc}, the ID algorithm was extended to answer \emph{conditional effect} queries of the form $P_\mathbf{X}(\mathbf{Y}|\mathbf{Z})$ for disjoint sets $\mathbf{X,Y,Z}\subset \mathbf{O}$ by showing that every conditional ID query can be reduced to an unconditional ID query using the procedure shown in Algorithm \ref{alg:idc}. This procedure finds the maximal subset of variables in the conditioning set $\mathbf{Z}$ to bring into the intervention set using Rule 2 of $do$-calculus (action/observation exchange) \cite[Chapter 3]{pearl2009causality}. The resulting conditional interventional distribution is then proportional to the joint distribution of $\mathbf{Y}$ and the remaining variables in the conditioning set. A call to ID can then determine the identifiability of the resulting unconditional query.

\textbf{Transportability:} \emph{Transportability} is a framework for the synthesis of experimental and observational data from multiple environments to answer a statistical or causal query in a prespecified target environment \citep{pearl2011transportability,bareinboim2012transportability}. In order to build safe and reliable models, we restrict our attention to learning predictive models that can be \emph{directly transported} from the source environment to an unspecified target environment without any adjustment.

\begin{definition}[Selection diagram]
A selection diagram is a causal DAG or ADMG augmented with auxiliary selection variables $\mathbf{S}$ (denoted by square nodes) such that for $S\in \mathbf{S}, X\in\mathbf{O}\cup\mathbf{U}$ an edge $S \rightarrow X$ denotes the causal mechanism that generates $X$ may vary arbitrarily in different environments. Selection variables may have at most one child.
\end{definition}
We refer to the children of $\mathbf{S}$ as \emph{mutable} variables. Selection diagrams define a family of distributions over environments such that $P(X|pa(X)),\forall X\in ch(\mathbf{S})$ in (\ref{eq:admg}) can differ arbitrarily in each environment. Constructing a selection diagram generally requires domain knowledge to specify the mechanisms and the placement of selection variables. Without prior knowledge \emph{causal discovery} methods can potentially be used \citep{spirtes2000causation}.

We now define \emph{stability} as a predictive analog of \emph{direct transportability} \citep{pearl2011transportability}, in which a source environment relationship holds in the target environment without adjustment.
\begin{definition}[Stable estimator]
An estimator for predicting a variable $T$ is said to be stable if it is independent of all $S \in \mathbf{S}$.
\end{definition}
Graph pruning and graph surgery can both produce stable estimators, but pruning estimators will always be observational conditional distributions while surgery estimators will be the identified form of an interventional distribution.

\subsection{The Graph Surgery Estimator}\label{subsec:surgery}
Graph surgery assumes the data modeler has constructed or been given a causal DAG of the DGP with target prediction variable $T$, observed variables $\mathbf{O}$, and unobserved variables $\mathbf{U}$ that has been augmented with selection variables $\mathbf{S}$ using prior knowledge about mechanisms that are expected to differ across environments (e.g., prescription policy). An overview of the procedure is as follows: The selection DAG is converted to a selection ADMG so it is compatible with the ID algorithm. Children of $\mathbf{S}$ in the selection ADMG form the set of mutable variables $\mathbf{M}$. The proposed algorithm then searches all possible interventional distributions (which intervene on $\mathbf{M}$) for the optimal (with respect to held-out source environment data) identifiable  distribution, which is normalized and returned as the surgery estimator. We now cover each step in detail.

Only observed variables can be intervened on, so to determine $\mathbf{M}$, we take the latent projection of the selection DAG $\mathcal{H}$ to turn it into an ADMG $\mathcal{G}$. If a selection variable $S\in\mathbf{S}$ has multiple children in $\mathcal{G}$, then $S$ should be split into multiple selection variables, one per child, with the new selection variables added to $\mathbf{S}$. Any disconnected variables in $\mathbf{S}$ can be removed. The mutable variables are then given by $\mathbf{M}=ch_\mathcal{G}(\mathbf{S}) \subseteq \mathbf{O}$. We now establish that intervening on (at least) $\mathbf{M}$ results in a stable estimator.
\begin{prop}
For $\mathbf{Y}\subseteq \mathbf{O}$, $\mathbf{X} \supseteq \mathbf{M}$ such that $\mathbf{Y}\intersect\mathbf{X}=\emptyset$, the interventional distribution $P_\mathbf{X}(\mathbf{Y})$ is stable.
\end{prop}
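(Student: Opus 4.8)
The plan is to show that intervening on any set $\mathbf{X}$ that swallows all of the mutable variables $\mathbf{M}$ makes the selection nodes causally inert, so the post-interventional law of $\mathbf{Y}$ cannot depend on the environment. I would carry this out in three steps.

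\textbf{Step 1 (structure of $\mathbf{S}$ after preprocessing).} First I would pin down what the graph-surgery preprocessing guarantees: after taking the latent projection onto $\mathbf{O}$, splitting any selection node with several children, and deleting disconnected selection nodes, every $S\in\mathbf{S}$ is a parentless vertex of $\mathcal{G}$ whose only incident edge is a single directed edge $S\to M$ into some $M\in\mathbf{M}=ch_\mathcal{G}(\mathbf{S})\subseteq\mathbf{O}$; in particular $\mathbf{S}$ is incident to no bidirected edge. Thus in $\mathcal{G}$ the entire influence of $\mathbf{S}$ is funneled through $\mathbf{M}$.

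\textbf{Step 2 (mutilation isolates $\mathbf{S}$).} Since $\mathbf{X}\supseteq\mathbf{M}$, passing to the mutilated graph $\mathcal{G}_{\overline{\mathbf{X}}}$ deletes every edge into $\mathbf{X}$, hence every edge into $\mathbf{M}$, hence every edge out of $\mathbf{S}$. So each $S\in\mathbf{S}$ is an isolated vertex of $\mathcal{G}_{\overline{\mathbf{X}}}$, and therefore $(\mathbf{Y}\ci\mathbf{S})_{\mathcal{G}_{\overline{\mathbf{X}}}}$ for any $\mathbf{Y}\subseteq\mathbf{O}\setminus\mathbf{X}$ (note $\mathbf{Y}\subseteq\mathbf{O}\setminus\mathbf{X}$, as $\mathbf{Y}\cap\mathbf{X}=\emptyset$). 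By Rule 1 of $do$-calculus this gives $P_\mathbf{X}(\mathbf{Y}\mid\mathbf{S})=P_\mathbf{X}(\mathbf{Y})$: the interventional distribution does not change with the environment, which is precisely the definition of a stable estimator.

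\textbf{Step 3 (factorization check — the version I would actually write).} To avoid invoking $do$-calculus on a graph carrying selection nodes, I would instead argue straight from the truncated factorization. With $\mathbf{V}=\mathbf{O}\setminus\mathbf{X}\supseteq\mathbf{Y}$, we have $P_\mathbf{X}(\mathbf{V})=\sum_{\mathbf{U}}\prod_{V_i\in\mathbf{V}}P(V_i\mid pa(V_i))\,P(\mathbf{U})$. A surviving factor $P(V_i\mid pa(V_i))$ can mention some $S\in\mathbf{S}$ only if $V_i\in ch_\mathcal{G}(\mathbf{S})=\mathbf{M}$, which is impossible since $V_i\in\mathbf{V}=\mathbf{O}\setminus\mathbf{X}$ while $\mathbf{M}\subseteq\mathbf{X}$; and $P(\mathbf{U})$ is environment-invariant after the preprocessing. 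Hence every surviving term is a mechanism not pointed to by any selection variable, so by the selection-diagram semantics it is identical in every environment; therefore $P_\mathbf{X}(\mathbf{V})$, and its marginal $P_\mathbf{X}(\mathbf{Y})=\sum_{\mathbf{V}\setminus\mathbf{Y}}P_\mathbf{X}(\mathbf{V})$, is the same across environments, i.e., independent of $\mathbf{S}$.

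The only place requiring real care — the ``hard part,'' such as it is — is Step 1: one must check that the latent projection together with the split/prune of $\mathbf{S}$ genuinely confines all of $\mathbf{S}$'s edges to $\mathbf{M}\subseteq\mathbf{O}$, so that $\mathbf{M}$ is a true cutset separating $\mathbf{S}$ from the rest of the model (no leftover $S\leftrightarrow O$ edge, no residual $S\to U\to\cdots$ path, no second child). Once that is nailed down, Steps 2 and 3 are immediate.
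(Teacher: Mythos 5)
Your Step 2 is essentially the paper's own proof: since $\mathbf{X}\supseteq\mathbf{M}=ch_{\mathcal{G}}(\mathbf{S})$, the mutilated graph $\mathcal{G}_{\overline{\mathbf{X}}}$ has every edge out of $\mathbf{S}$ removed, so $\mathbf{S}$ is disconnected (hence $d$-separated) from $\mathbf{Y}$, which is exactly stability. The additions are fine --- Step 1 just makes explicit what the paper's preprocessing leaves implicit, and the truncated-factorization check in Step 3 is a harmless restatement, though its claim that $P(\mathbf{U})$ is environment-invariant tacitly assumes no selection variable points to an unobserved variable in the original DAG; if some $S\rightarrow U$ exists, the argument still goes through because every observed variable influenced by that $U$ becomes a child of $S$ in the latent projection, hence lies in $\mathbf{M}\subseteq\mathbf{X}$, so no surviving factor involves that $U$.
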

\begin{proof}
The intervention $do(\mathbf{X})$ results in the graph $\mathcal{G}_{\overline{\mathbf{X}}}$ in which all edges into $\mathbf{X}$ are removed. Since $\mathbf{X} \supseteq \mathbf{M}$ and $\mathbf{M}=ch(\mathbf{S})$, the intervention removes all edges out of $\mathbf{S}$. This means $\mathbf{S}$ is disconnected (and thus $d$-separated) from $\mathbf{Y}$ in $\mathcal{G}_{\overline{\mathbf{X}}}$ which gives us stability.
\end{proof}

What interventional distribution should we use to predict $T$? A natural idea is to use the full conditional interventional distribution $P_{\mathbf{M}}(T|\mathbf{O}\setminus (\mathbf{M} \cup \{T\}))$ which can be turned into a corresponding unconditional query (so we can call ID) using a call to Algorithm \ref{alg:idc}: UQ$(\mathbf{M}, T, \mathbf{O}\setminus (\mathbf{M} \cup \{T\}); \mathcal{G})$. However, this has two issues. First, if the target variable is mutable itself ($T\in\mathbf{M}$) then the conditional interventional distribution is ill-defined since the three variable sets must be disjoint. If $T$ is mutable, then we must intervene on it, graphically represented by deleting all edges into $T$. Variables related to $T$ through edges out of $T$ (e.g., children and their bidirected neighborhoods) can still be used to predict $T$. Thus, if $T\in \mathbf{M}$, we can generate an unconditional query of the form $P_\mathbf{X}(\mathbf{Y})$ from UQ$(\mathbf{M} \setminus \{T\}, T, \mathbf{O}\setminus \mathbf{M}; \mathcal{G}_{\overline{T}})$, noting that we are using the mutilated graph $\mathcal{G}_{\overline{T}}$. We must further modify the result to account for the fact that we are also intervening on $T$: $P_{\mathbf{X} \cup \{T\}}(\mathbf{Y}\setminus \{T\})$. Importantly, there is never a stable pruning estimator when $T\in\mathbf{M}$ which shows that graph surgery can provide stability in cases where existing pruning solutions cannot.

Second, the full conditional interventional distribution may not be identifiable. We propose an exhaustive search over possible conditioning sets: trying each $P_{\mathbf{M}}(T|\mathbf{Z})$ for $\mathbf{Z} \in \mathcal{P}(\mathbf{W})$,  $\mathbf{W} = \mathbf{O}\setminus (\mathbf{M} \cup \{T\})$ where $\mathcal{P}(\cdot)$ denotes the power set. In the interest of identifiability, even if $T\not\in\mathbf{M}$ we may want to consider intervening on $T$.\footnote{Deleting edges in a graph generally helps identifiability \citep{pearl2009causality}.} For example, in Figure \ref{fig:examples}(a), $P_X(T|Y)$ and $P_X(T)$ are not identifiable, but $P_{X,T}(Y)$ is. Thus, we should consider the unconditional query returned by Algorithm \ref{alg:idc} in both $\mathcal{G}$ and $\mathcal{G}_{\overline{T}}$ (with the modification of moving $T$ to the intervention set). The full procedure is given as Algorithm \ref{alg:surgery}. Note that it returns the estimator that performs the best on held out source-environment validation data with respect to some loss function $\ell$. If there is no identifiable interventional distribution the Algorithm throws a failure exception.

\begin{algorithm}[!t]
\SetKwBlock{Try}{try}{end}
\SetKwBlock{Catch}{catch}{end}
\SetKw{pass}{pass}
\SetKw{continue}{continue}
\SetKwProg{Fn}{Function}{:}{}
 \SetKwInOut{Input}{input}\SetKwInOut{Output}{output}
 \Input{ADMG $\mathcal{G}$, mutable variables $\mathbf{M}$, target $T$}
 \Output{Expression for the surgery estimator or \texttt{FAIL} if there is no stable estimator.}
 Let $S_{ID}=\emptyset$; Let $Loss=\emptyset$\;
 \For{$\mathbf{Z}\in\mathcal{P}(\mathbf{\mathbf{O}\setminus(\mathbf{M}\cup \{T\})})$}{
    \If{$T\not\in\mathbf{M}$}{
        Let $\mathbf{X},\mathbf{Y}=$ UQ$(\mathbf{M}, \{T\}, \mathbf{Z};\mathcal{G})$\;
        \Try{
            $P=$ID$(\mathbf{X},\mathbf{Y};\mathcal{G})$\;
            $P_s= P/\sum_T P$\;
            Compute validation loss $\ell(P_s)$\;
            $S_{ID}$.append($P_s$); $Loss$.append($\ell(P_s)$)\;
        }\Catch{
            \pass\;
        }
    }
    Let $\mathbf{X},\mathbf{Y}=$ UQ$(\mathbf{M}, \{T\}, \mathbf{Z};\mathcal{G}_{\overline{T}})$\;
    $\mathbf{X} = \mathbf{X}\cup\{T\}$; $\mathbf{Y}= \mathbf{Y} \setminus \{T\}$\;
    \If{$\mathbf{Y}\intersect ({T}\cup ch(T))=\emptyset$}{\continue\;}
    \Try{
        $P=$ID$(\mathbf{X},\mathbf{Y};\mathcal{G})$\;
        $P_s= P/\sum_T P$\;
        Compute validation loss $\ell(P_s)$\;
        $S_{ID}$.append($P_s$); $Loss$.append($\ell(P_s)$)\;
    }\Catch{
        \continue\;
    }
 }
 \If{$S_{ID}=\emptyset$}{\KwRet \texttt{FAIL}\;}
 \KwRet $P_s\in S_{ID}$ with lowest corresponding $Loss$\;
 \caption{Graph Surgery Estimator}
 \label{alg:surgery}
\end{algorithm}

\subsection{Soundness and Completeness}\label{subsec:sound}
Algorithm \ref{alg:surgery} is sound in that it only returns stable estimators and complete in that it finds a stable surgery estimator if one exists. Proofs are in the supplement.

\begin{theorem}[Soundness]
When Algorithm \ref{alg:surgery} returns an estimator, the estimator is stable.
\end{theorem}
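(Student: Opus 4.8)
The plan is to reduce soundness to Proposition~1 by checking that \emph{every} object the algorithm ever places in $S_{ID}$ is a fixed deterministic post‑processing of an interventional distribution $P_\mathbf{X}(\mathbf{Y})$ with $\mathbf{Y}\subseteq\mathbf{O}$, $\mathbf{X}\supseteq\mathbf{M}$, and $\mathbf{X}\cap\mathbf{Y}=\emptyset$. Since the returned estimator is whichever element of $S_{ID}$ attains the lowest validation loss, it is enough to establish this for an arbitrary appended candidate. The argument then has three ingredients: (i) the pair $(\mathbf{X},\mathbf{Y})$ used in each \texttt{ID} call has the form required by Proposition~1; (ii) by soundness of the ID algorithm, whenever \texttt{ID}$(\mathbf{X},\mathbf{Y};\mathcal{G})$ returns an expression $P$ (rather than triggering the failure exception that the \texttt{try/catch} blocks swallow), that expression equals $P_\mathbf{X}(\mathbf{Y})$ in every causal model inducing $\mathcal{G}$, hence is stable by Proposition~1; and (iii) stability is preserved by the normalization $P_s = P/\sum_T P$.

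For ingredient (i) I would examine the two candidate‑generating branches. In the $T\notin\mathbf{M}$ branch, $(\mathbf{X},\mathbf{Y})=$ UQ$(\mathbf{M},\{T\},\mathbf{Z};\mathcal{G})$ with $\mathbf{Z}\in\mathcal{P}(\mathbf{O}\setminus(\mathbf{M}\cup\{T\}))$. Inspecting Algorithm~1, UQ only ever moves elements of the conditioning set into the intervention set, so it returns $\mathbf{X}=\mathbf{M}\cup\mathbf{Z}''$ for some $\mathbf{Z}''\subseteq\mathbf{Z}$ and $\mathbf{Y}=\{T\}\cup(\mathbf{Z}\setminus\mathbf{Z}'')$; in particular $\mathbf{X}\supseteq\mathbf{M}$, $\mathbf{Y}\subseteq\mathbf{O}$, and $\mathbf{X}\cap\mathbf{Y}=\emptyset$. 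In the second branch, $(\mathbf{X},\mathbf{Y})$ comes from UQ called on $\mathcal{G}_{\overline{T}}$ (with $\mathbf{M}$ replaced by $\mathbf{M}\setminus\{T\}$ when $T\in\mathbf{M}$), after which the algorithm sets $\mathbf{X}\leftarrow\mathbf{X}\cup\{T\}$ and $\mathbf{Y}\leftarrow\mathbf{Y}\setminus\{T\}$. After this modification $\mathbf{X}\supseteq(\mathbf{M}\setminus\{T\})\cup\{T\}\supseteq\mathbf{M}$, disjointness $\mathbf{X}\cap\mathbf{Y}=\emptyset$ is retained precisely because $T$ was deleted from $\mathbf{Y}$, and $\mathbf{Y}\subseteq\mathbf{O}$. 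The \texttt{continue} guard ($\mathbf{Y}\cap(\{T\}\cup ch(T))=\emptyset$) and the use of $\mathcal{G}_{\overline{T}}$ only affect \emph{which} queries are attempted, not the structural form of the candidates, so in both branches the candidate before normalization is $P_\mathbf{X}(\mathbf{Y})$ with $\mathbf{X}\supseteq\mathbf{M}$.

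For ingredients (ii) and (iii): by Proposition~1 each such $P_\mathbf{X}(\mathbf{Y})$ is stable, since $do(\mathbf{X})$ with $\mathbf{X}\supseteq\mathbf{M}=ch_{\mathcal{G}}(\mathbf{S})$ deletes all edges out of $\mathbf{S}$, leaving $\mathbf{S}$ $d$-separated from $\mathbf{Y}$ in $\mathcal{G}_{\overline{\mathbf{X}}}$, so the distribution does not functionally depend on any $S\in\mathbf{S}$; soundness of ID guarantees the returned expression $P$ is exactly this distribution. Normalization $P_s=P/\sum_T P$ (in the second branch the sum is over the intervened value of $T$) is a fixed transformation — marginalization of a subset of arguments followed by pointwise division — and cannot create a dependence on $\mathbf{S}$ absent from $P$; hence $P_s$ is stable. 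Every element of $S_{ID}$ is therefore stable, so the returned estimator is stable. I expect the only non‑routine step to be the case analysis of the second paragraph: verifying that $\mathbf{X}\supseteq\mathbf{M}$ and $\mathbf{X}\cap\mathbf{Y}=\emptyset$ survive the post‑hoc move of $T$ into the intervention set, and handling the $T\in\mathbf{M}$ bookkeeping; the rest is a direct appeal to Proposition~1 and the soundness of ID.
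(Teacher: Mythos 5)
Your proposal is correct and follows essentially the same route as the paper's proof: observe that every ID query issued by Algorithm~2 intervenes on a superset of $\mathbf{M}$, apply Proposition~1 to conclude the corresponding interventional distribution is stable, and invoke soundness of the ID algorithm so that the returned functional inherits this stability. The extra bookkeeping you supply (the two branches of the algorithm, the behavior of UQ, the $T\in\mathbf{M}$ case, and the harmlessness of the normalization $P_s=P/\sum_T P$) is a more detailed write-up of the same argument the paper states tersely.
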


\begin{theorem}[Completeness]\label{theorem:complete}
If Algorithm \ref{alg:surgery} fails, then there exists no stable surgery estimator for predicting $T$.
\end{theorem}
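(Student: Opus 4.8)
The plan is to prove the contrapositive: if some stable surgery estimator for $T$ exists, then at least one iteration of Algorithm~\ref{alg:surgery} produces a candidate, so $S_{ID}\neq\emptyset$ and the algorithm does not return \texttt{FAIL}. The first step is to pin down the class of objects the theorem quantifies over. By definition a surgery estimator is the (possibly $T$-normalized) identified form of an interventional distribution that predicts $T$ and intervenes on the mutable set $\mathbf{M}$; since the only intervenable and conditionable variables lie in $\mathbf{O}$, and the algorithm additionally allows intervening on $T$ for identifiability's sake, every surgery estimator is equivalent to the identified form of a query of one of two shapes: (i) $P_{\mathbf{M}}(T\mid\mathbf{Z})$ with $\mathbf{Z}\subseteq\mathbf{W}:=\mathbf{O}\setminus(\mathbf{M}\cup\{T\})$ (possible only when $T\notin\mathbf{M}$), or (ii) the $T$-normalization of $P_{\mathbf{M}\cup\{T\}}(\mathbf{Y}\mid\mathbf{Z})$ with $\mathbf{Y},\mathbf{Z}\subseteq\mathbf{W}$ and $\mathbf{Y}$ containing a variable through which $T$ can still be predicted after its incoming edges are cut. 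Stability of any such estimator, once identified, follows from Proposition~1 (a conditional interventional distribution is a ratio of stable unconditional ones) together with the uniqueness clause in the definition of identifiability, which forces the identified expression to coincide with that stable interventional distribution in every environment.

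Next I would verify that Algorithm~\ref{alg:surgery} is exhaustive over this class. Its outer loop ranges over every $\mathbf{Z}\in\mathcal{P}(\mathbf{W})$. For each $\mathbf{Z}$, the branch guarded by $T\notin\mathbf{M}$ forms exactly the conditional query of shape (i) via UQ$(\mathbf{M},\{T\},\mathbf{Z};\mathcal{G})$; the other branch forms the query of shape (ii) via UQ$(\mathbf{M},\{T\},\mathbf{Z};\mathcal{G}_{\overline{T}})$ followed by moving $T$ into the intervention set, with the test ``$\mathbf{Y}\intersect(\{T\}\union ch(T))=\emptyset \Rightarrow$ \texttt{continue}'' discarding precisely those $\mathbf{Z}$ for which the resulting query cannot be used to predict $T$ (equivalently, for which the normalization $P/\sum_T P$ would be degenerate). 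Hence if a stable surgery estimator exists, its underlying query of shape (i) or (ii) is attempted in some iteration.

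The final step is to invoke completeness of the two oracles. The conditional-to-unconditional reduction of Algorithm~\ref{alg:idc} (from \citet{shpitser2006idc}) is complete: $P_{\mathbf{M}}(T\mid\mathbf{Z})$ (resp.\ its $\mathcal{G}_{\overline{T}}$ variant with $T$ added to the intervention set) is identifiable if and only if the unconditional query $P_{\mathbf{X}'}(\mathbf{Y}')$ it returns is identifiable. And the ID algorithm is sound and complete \citep{shpitser2006identification}: it returns an expression if and only if that unconditional query is identifiable, and otherwise raises the failure exception caught in the \texttt{catch} block. Combining these, if the stable surgery estimator's query is identifiable, the corresponding iteration's call to ID succeeds, the result is normalized, scored by $\ell$, and appended to $S_{ID}$; so $S_{ID}\neq\emptyset$ and Algorithm~\ref{alg:surgery} does not return \texttt{FAIL}. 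Contrapositively, if the algorithm fails there is no stable surgery estimator for $T$, which is the claim.

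I expect the delicate part to be the first step---characterizing ``surgery estimator'' tightly enough that the enumeration is provably exhaustive rather than exhaustive by fiat. Two points need care: (a) arguing that one never needs to intervene on a strict superset of $\mathbf{M}\cup\{T\}$ beyond the conditioning variables that UQ already absorbs into the intervention set through Rule~2 of do-calculus, so that shapes (i)--(ii) really are all the cases; and (b) justifying that, when $T$ is intervened on, the only variables usable to predict $T$ are those in $\mathbf{Y}$ that remain dependent on $do(T)$ after surgery---which is what the $ch(T)$ guard is meant to capture, and one may need to state it in terms of the bidirected-connected component of $T$'s children to be fully rigorous, as the prose already hints. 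Everything after that is bookkeeping plus citation of the established completeness results for UQ/IDC and ID.
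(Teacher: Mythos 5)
Your proposal is correct and takes essentially the same route as the paper: the paper's own proof is just the two-sentence observation that Algorithm~\ref{alg:surgery} is an exhaustive search over interventional distributions that intervene on supersets of $\mathbf{M}$ and are functions of $T$, combined with completeness of the ID algorithm \citep{shpitser2006identification}. Your additional detail (the two query shapes, the completeness of the conditional-to-unconditional reduction, and the role of the $ch(T)$ guard) fleshes out the same skeleton, and the delicate point you flag about characterizing ``surgery estimator'' tightly is glossed over just as briefly in the paper's own argument.
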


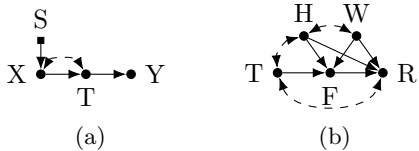
\begin{figure}[!t]
\vspace{-0.2in}
\centering
    \subfloat[]{%
      \begin{tikzpicture}
            \def\unit{0.3}
            \node (x) at (0,0) [label=left:X,point];
            \node (t) at (2*\unit, 0) [label=below:T,point];
            \node (s) at (0, 1.5*\unit) [label=above:S, square];
            \node (y) at (4*\unit, 0) [label=right:Y, point];
            
            \path (s) edge (x);
            \path (x) edge (t);
            \path (t) edge (y);
            \path[bidirected] (x) edge[bend left=60] (t);
        \end{tikzpicture}
     }\qquad
     \subfloat[]{%
      \begin{tikzpicture}
          \def\unit{0.7}
            \node (t) at (0, 0) [label=left:T,point];
            \node (f) at (1*\unit, 0) [label=below:F,point];
            \node (r) at (2*\unit, 0) [label=right:R,point];
            \node (h) at (0.5*\unit, 0.7*\unit) [label=above:H,point];
            \node (w) at (1.5*\unit, 0.7*\unit) [label=above:W,point];
            
            \path (t) edge (f);
            \path (f) edge (r);
            \path (h) edge (f);
            \path (h) edge (r);
            \path (w) edge (r);
            \path (w) edge (f);
            \path[bidirected] (h) edge[bend right=40] (t);
            \path[bidirected] (h) edge[bend left=30] (w);
            \path[bidirected] (t) edge[bend right=70] (r);
        \end{tikzpicture}
     }
            
\caption{(a) Selection ADMG which requires intervening on $T$. (b) Causal ADMG for Bike Sharing. }
\label{fig:examples}
\vspace{-0.2in}
\end{figure}

\section{CONNECTIONS WITH EXISTING APPROACHES}
We establish connections between graph surgery and existing proactive approaches, showing that graph pruning  (which finds stable conditional relationships) is a special case of surgery and that surgery has an optimal distributionally robust interpretation.

\subsection{Relationship with Graph Pruning}
We show that graph pruning estimators are in fact surgery estimators, so graph surgery does not fail on problems graph pruning can solve.

\begin{lemma}\label{lemma:superset}
Let $T$ be the target variable of prediction and $\mathcal{G}$ be a selection ADMG with selection variables $\mathbf{S}$. If there exists a stable conditioning set $\mathbf{Z}$ such that $P(T|\mathbf{Z}) = P(T|\mathbf{Z},\mathbf{S})$, then Algorithm \ref{alg:surgery} will not fail on input $(\mathcal{G}, ch(\mathbf{S}), T)$.
\end{lemma}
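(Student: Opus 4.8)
My plan is to exhibit a single interventional query that Algorithm~\ref{alg:surgery} attempts and show that it is identifiable; since the algorithm returns \texttt{FAIL} only when \emph{every} query it tries fails, this is enough. First I would record two preliminaries. (i) The hypothesis $P(T\mid\mathbf{Z})=P(T\mid\mathbf{Z},\mathbf{S})$ is the graphical statement $(T\ci\mathbf{S}\mid\mathbf{Z})_{\mathcal{G}}$; in particular no $S\in\mathbf{S}$ is adjacent to $T$, so $T\notin ch(\mathbf{S})=\mathbf{M}$, and hence Algorithm~\ref{alg:surgery} enters the ``$T\notin\mathbf{M}$'' branch on every iteration. (ii) Setting $\mathbf{W}:=\mathbf{Z}\setminus\mathbf{M}$, we have $\mathbf{W}\subseteq\mathbf{O}\setminus(\mathbf{M}\cup\{T\})$, so $\mathbf{W}$ is one of the conditioning sets the \texttt{for} loop ranges over; at that iteration the algorithm forms UQ$(\mathbf{M},\{T\},\mathbf{W};\mathcal{G})$ and calls ID, and by completeness of the conditional ID procedure this call succeeds if and only if $P_{\mathbf{M}}(T\mid\mathbf{W})$ is identifiable in $\mathcal{G}$. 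So it remains to prove that $P_{\mathbf{M}}(T\mid\mathbf{W})$ is identifiable.

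The key claim I would prove is the stronger equality $P_{\mathbf{M}}(T\mid\mathbf{W})=P(T\mid\mathbf{Z})$, whose right-hand side is trivially identifiable. Write $\mathbf{M}=\mathbf{M}_1\cup\mathbf{M}_2$ (disjoint) with $\mathbf{M}_1=\mathbf{M}\cap\mathbf{Z}$, so $\mathbf{Z}=\mathbf{M}_1\cup\mathbf{W}$ (disjoint). I would get the claim in two $do$-calculus steps: Rule~3 to delete the action $do(\mathbf{M}_2)$, yielding $P(T\mid do(\mathbf{M}_1),do(\mathbf{M}_2),\mathbf{W})=P(T\mid do(\mathbf{M}_1),\mathbf{W})$; then Rule~2 to exchange $do(\mathbf{M}_1)$ for an observation, yielding $P(T\mid do(\mathbf{M}_1),\mathbf{W})=P(T\mid\mathbf{M}_1,\mathbf{W})=P(T\mid\mathbf{Z})$. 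Each step carries a $d$-separation side condition in a mutilated graph ($\mathcal{G}_{\underline{\mathbf{M}_1}}$ for Rule~2; $\mathcal{G}_{\overline{\mathbf{M}_1}}$ with the incoming edges of the non-ancestors of $\mathbf{W}$ in $\mathbf{M}_2$ deleted, for Rule~3), and verifying these is the substance of the argument.

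I would verify both side conditions with the same device: assume for contradiction a path from $T$ to some $M_i\in\mathbf{M}$ that is active given the relevant conditioning set in the relevant mutilated graph; since $M_i\in ch(\mathbf{S})$ there is an edge $S_i\to M_i$, which I append to obtain a $T$--$S_i$ path, and — using that deleting edges only preserves $d$-separations, so the path is still active given $\mathbf{Z}$ in $\mathcal{G}$ — a short case split on the orientation at the $M_i$ endpoint shows the extended path is active given $\mathbf{Z}$ in $\mathcal{G}$, contradicting $(T\ci\mathbf{S}\mid\mathbf{Z})_{\mathcal{G}}$. For Rule~2 the extended $M_i$ is a collider that is opened because $M_i\in\mathbf{M}_1\subseteq\mathbf{Z}$ (plus a small observation that such a path cannot route through other $\mathbf{M}_1$-nodes, which have no descendants in $\mathcal{G}_{\underline{\mathbf{M}_1}}$); for Rule~3 the non-collider case is immediate since $M_i\in\mathbf{M}_2$ is disjoint from $\mathbf{Z}$, and the collider case splits on whether $M_i$ is an ancestor of $\mathbf{W}$ (then it has a descendant in $\mathbf{Z}$, opening the collider) or not (then its incoming edges were already deleted, so no such path exists in the Rule~3 graph). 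The main obstacle is precisely this bookkeeping: when $\mathbf{Z}$ contains mutable variables the surgery conditioning set $\mathbf{W}$ genuinely differs from $\mathbf{Z}$, so one cannot equate $P_{\mathbf{M}}(T\mid\mathbf{W})$ with an observational conditional in a single line, and each rule's $d$-separation hypothesis has to be extracted from the one hypothesis $(T\ci\mathbf{S}\mid\mathbf{Z})_{\mathcal{G}}$. In the clean special case $\mathbf{Z}\cap\mathbf{M}=\emptyset$ this all collapses: $\mathbf{M}_1=\emptyset$ and $P_{\mathbf{M}}(T\mid\mathbf{Z})=P(T\mid\mathbf{Z})$ by Rule~3 alone.
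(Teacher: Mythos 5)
Your proof is correct and follows essentially the same route as the paper's: split $\mathbf{Z}$ into its mutable part $\mathbf{M}_1$ and $\mathbf{W}=\mathbf{Z}\setminus\mathbf{M}$, show $P_{\mathbf{M}}(T\mid\mathbf{W})=P(T\mid\mathbf{Z})$ via Rule~2 and Rule~3 of $do$-calculus, and verify each rule's $d$-separation side condition by appending an $S\to M_i$ edge to a hypothetical active path to contradict $(T\ci\mathbf{S}\mid\mathbf{Z})_{\mathcal{G}}$. The only differences are presentational (you run the $do$-calculus chain in the reverse direction and add the explicit observations that $T\notin\mathbf{M}$ and that identifiability of the conditional query suffices), and your bookkeeping of the side conditions is if anything slightly more careful than the paper's.
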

\begin{proof}
We show $\exists \mathbf{W} \subseteq\mathbf{Z}$ s.t. $P(T|\mathbf{Z})=P_\mathbf{M}(T|\mathbf{W})$. See supplement.
\end{proof}

In the proof of Lemma \ref{lemma:superset} we derived that graph pruning is a special case of graph surgery:
\begin{corollary}
Graph pruning estimators are graph surgery estimators since they can be expressed as conditional interventional distributions.
\end{corollary}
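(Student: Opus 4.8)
The plan is to treat this corollary as a direct repackaging of the identity established inside the proof of Lemma~\ref{lemma:superset}: for any stable conditioning set $\mathbf{Z}$ (one with $P(T \mid \mathbf{Z}) = P(T \mid \mathbf{Z}, \mathbf{S})$) there is a set $\mathbf{W} \subseteq \mathbf{Z} \setminus \mathbf{M}$ with $P(T \mid \mathbf{Z}) = P_{\mathbf{M}}(T \mid \mathbf{W})$. The right-hand side is by construction a conditional interventional distribution that intervenes on all of the mutable set $\mathbf{M}$, so the corollary is exactly the observation that this identity writes an arbitrary graph pruning estimator in the form of a member of the family of conditional interventional distributions that Algorithm~\ref{alg:surgery} enumerates.

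Concretely I would proceed in three steps. First, recall (or re-derive) the identity of Lemma~\ref{lemma:superset}: split $\mathbf{Z}$ into its mutable part $\mathbf{Z} \cap \mathbf{M}$ and the remainder, and argue that intervening on all of $\mathbf{M}$ while still conditioning on the non-mutable remainder $\mathbf{W}$ reproduces $P(T \mid \mathbf{Z})$; the justification is Rule~2 of $do$-calculus (action/observation exchange) applied to the variables in $\mathbf{Z} \cap \mathbf{M}$, whose graphical precondition will be supplied by stability. Second, note that $P_{\mathbf{M}}(T \mid \mathbf{W})$ is then trivially identifiable, being literally equal to a conditional of the observed joint $P(\mathbf{O})$, so the ID call inside Algorithm~\ref{alg:surgery} on the output of UQ$(\mathbf{M}, \{T\}, \mathbf{W}; \mathcal{G})$ does not throw a failure. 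Third, observe that $\mathbf{W} \subseteq \mathbf{O} \setminus (\mathbf{M} \cup \{T\})$ --- disjoint from $\mathbf{M}$ by construction, and from $T$ since it is a conditioning set for $T$ --- so the enumeration over $\mathcal{P}(\mathbf{O} \setminus (\mathbf{M} \cup \{T\}))$ reaches $\mathbf{Z}' = \mathbf{W}$, and there the normalization $P / \sum_T P$ returns exactly $P_{\mathbf{M}}(T \mid \mathbf{W}) = P(T \mid \mathbf{Z})$, which is appended to $S_{ID}$ as a bona fide surgery candidate. This is also consistent with the earlier remark that pruning estimators exist only when $T \notin \mathbf{M}$, so the branch of Algorithm~\ref{alg:surgery} in play is the $T \notin \mathbf{M}$ branch.

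The only step with real content is the first one: converting the probabilistic stability statement $P(T \mid \mathbf{Z}) = P(T \mid \mathbf{Z}, \mathbf{S})$ into the graphical precondition needed for the action/observation exchange on $\mathbf{Z} \cap \mathbf{M}$. Here I would use that stability yields $(T \ci \mathbf{S} \mid \mathbf{Z})$ in $\mathcal{G}$, that every variable in $\mathbf{Z} \cap \mathbf{M} = ch(\mathbf{S}) \cap \mathbf{Z}$ has a parent in $\mathbf{S}$, and that deleting the edges into $\mathbf{M}$ when forming the relevant mutilated graph only removes paths, to conclude the $d$-separation that Rule~2 requires; the bookkeeping through the latent projection and the handling of conditioning sets that may themselves contain mutable variables is the delicate part, and it is precisely what the proof of Lemma~\ref{lemma:superset} defers to the supplement. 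Everything after that is definitional: once the identity holds, the left side (a pruning estimator) has the form of the right side (a conditional interventional distribution intervening on $\mathbf{M}$), i.e., the class of stable pruning estimators sits inside the class of identifiable conditional interventional distributions on $\mathbf{M}$ searched by Algorithm~\ref{alg:surgery}.
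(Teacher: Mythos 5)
Your proposal takes essentially the same route as the paper: the corollary is presented there as an immediate repackaging of the identity $P(T\mid\mathbf{Z}) = P_{\mathbf{M}}(T\mid\mathbf{W})$, $\mathbf{W}\subseteq\mathbf{Z}$, derived in the proof of Lemma~\ref{lemma:superset}, and that is exactly what you do, with the additional (correct) observations about identifiability and Algorithm~\ref{alg:surgery}'s enumeration of $\mathbf{W}$. The one gloss is that passing from intervening on $\mathbf{Z}\cap\mathbf{M}$ to intervening on all of $\mathbf{M}$ requires Rule~3 (for $\mathbf{M}\setminus\mathbf{Z}$) in addition to Rule~2, but since you explicitly defer that bookkeeping to the already-established proof of Lemma~\ref{lemma:superset}, which supplies it, nothing is missing.
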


\begin{lemma}
There exists a problem for which graph pruning cannot find a non-empty stable conditioning set but for which graph surgery does not fail.
\end{lemma}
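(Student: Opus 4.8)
The plan is to prove this existence statement by exhibiting a single witness, namely the diagnosis selection diagram of Figure~\ref{fig:diagnosis}(a), whose latent projection onto $\mathbf{O}=\{T,A,C\}$ is the selection ADMG $\mathcal{G}$ of Figure~\ref{fig:diagnosis}(c), with target $T$ (lung cancer) and mutable set $\mathbf{M}=ch_\mathcal{G}(\mathbf{S})=\{A\}$. For this $\mathcal{G}$ I must verify two things: (i) graph pruning admits no non-empty stable conditioning set for predicting $T$, i.e.\ every nonempty $\mathbf{Z}\subseteq\{A,C\}$ has $P(T\mid\mathbf{Z})\neq P(T\mid\mathbf{Z},\mathbf{S})$; and (ii) Algorithm~\ref{alg:surgery} does not fail on input $(\mathcal{G},\{A\},T)$. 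Note that $T\notin\mathbf{M}$ here, so only the first branch of the loop in Algorithm~\ref{alg:surgery} is needed.

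For (i), the only candidate nonempty conditioning sets are $\{A\}$, $\{C\}$, and $\{A,C\}$, and in each case I would exhibit an active path between $T$ and $S$ under $d$-separation in $\mathcal{G}$. Conditioning on $A$ opens the collider path $S\to A\leftrightarrow T$ (the bidirected edge, inherited from the latent common cause $K$, places an arrowhead at $A$, so $A$ is a collider that is now conditioned on), which disposes of $\mathbf{Z}=\{A\}$ and $\mathbf{Z}=\{A,C\}$; conditioning on $C$ opens the collider path $S\to A\to C\leftarrow T$, which disposes of $\mathbf{Z}=\{C\}$. For completeness I would also check that the empty set \emph{is} stable ($A$ and $C$ are both unconditioned colliders on the only $S$--$T$ paths), so that graph pruning's sole stable estimator is the feature-free $P(T)$, matching the claim made in the introduction.

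For (ii), I would obtain stability for free from Proposition~1: taking $\mathbf{X}=\{A\}\supseteq\mathbf{M}$ and $\mathbf{Y}=\{T,C\}$ gives that $P_A(T,C)$, hence its normalization $P_A(T\mid C)$, is stable. It then remains only to show the ID call inside Algorithm~\ref{alg:surgery} succeeds for some $\mathbf{Z}\in\mathcal{P}(\{C\})$. Taking $\mathbf{Z}=\{C\}$: since $T$ is not $d$-separated from $C$ given $A$ in $\mathcal{G}_{\overline{A},\underline{C}}$, Algorithm~\ref{alg:idc} leaves $C$ in the conditioning set and ID is queried on $P_A(T,C)$; intervening on $A$ deletes $S\to A$ and the bidirected edge $T\leftrightarrow A$, so $C$ has no bidirected neighbors and the truncated ($c$-component) factorization yields $P_A(T,C)=P(T)\,P(C\mid T,A)$ --- precisely the expression already displayed in Section~1, which is a function of the observational distribution. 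Hence $S_{ID}$ is nonempty and Algorithm~\ref{alg:surgery} returns an estimator (one that uses both $A$ and $C$), which completes the proof. The only delicate point is the $d$-separation bookkeeping in step (i) --- in particular correctly treating the bidirected edge $T\leftrightarrow A$ as an arrowhead-into-$A$ edge both when opening collider paths and when mutilating for the intervention; the identifiability computation in (ii) is immediate from the factorization and needs no further work.
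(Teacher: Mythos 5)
Your proof is correct and uses exactly the same witness as the paper: the paper's proof simply points to the ADMG of Figure~\ref{fig:diagnosis}(c), relying on the discussion in the introduction that conditioning on $A$ or $C$ activates a path to $S$ while $P_A(T,C)=P(T)P(C|T,A)$ is identified and stable. You have merely made the $d$-separation and identification checks explicit, and they are all accurate.
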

\begin{proof}
As one such example, see Figure \ref{fig:diagnosis}(c).
\end{proof}

From the previous two Lemmas the following Corollary is immediate:
\begin{corollary}
There exists a stable graph surgery estimator for a strict superset of the problems for which there exists a stable graph pruning estimator.
\end{corollary}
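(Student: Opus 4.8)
The plan is to read this Corollary as a strict set containment and discharge it directly from the two preceding Lemmas together with the Soundness theorem. Write $\mathcal{C}_{\mathrm{prune}}$ for the class of prediction problems $(\mathcal{G},T)$ --- a selection ADMG with a designated target --- that admit a stable graph pruning estimator, and $\mathcal{C}_{\mathrm{surgery}}$ for the analogous class under graph surgery. I would first argue the inclusion $\mathcal{C}_{\mathrm{prune}}\subseteq\mathcal{C}_{\mathrm{surgery}}$ and then exhibit a witness separating the two classes.

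For the inclusion: fix $(\mathcal{G},T)\in\mathcal{C}_{\mathrm{prune}}$, so some conditioning set $\mathbf{Z}$ satisfies $P(T\mid\mathbf{Z})=P(T\mid\mathbf{Z},\mathbf{S})$. Lemma \ref{lemma:superset} then guarantees that Algorithm \ref{alg:surgery} does not fail on input $(\mathcal{G},ch(\mathbf{S}),T)$, hence it returns some estimator, and by the Soundness theorem that estimator is stable; therefore $(\mathcal{G},T)\in\mathcal{C}_{\mathrm{surgery}}$. Equivalently, one may invoke the Corollary that every graph pruning estimator is already expressible as a conditional interventional distribution, i.e.\ a graph surgery estimator, which the proof of Lemma \ref{lemma:superset} establishes.

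For strictness I would present a witness in $\mathcal{C}_{\mathrm{surgery}}\setminus\mathcal{C}_{\mathrm{prune}}$. The second Lemma above already supplies one: in Figure \ref{fig:diagnosis}(c), graph pruning admits no non-empty stable conditioning set, whereas Algorithm \ref{alg:surgery} does not fail, returning the stable estimator $P_A(T\mid C)\propto P(T)P(C\mid T,A)$ (stable by the Proposition on intervening on a superset of $\mathbf{M}$). A second, even cleaner witness is any diagram with $T\in ch(\mathbf{S})$ (target shift): as observed in Section \ref{subsec:surgery}, no stable pruning estimator exists there at all, while graph surgery can still succeed by intervening on $T$ and predicting from its children. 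Either route makes the containment proper, and the Corollary follows.

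The one delicate point --- the step I would be most careful about --- is pinning down exactly what ``a problem for which there exists a stable graph pruning estimator'' means. Under the strictly literal reading in which the empty conditioning set counts, Figure \ref{fig:diagnosis}(c) is not a clean separating instance, since $P(T)=P(T\mid S)$ holds there, so the empty-set pruning estimator is (trivially) stable; to use that example as a witness one must adopt the ``non-empty conditioning set'' convention that is already implicit in the statement of the second Lemma. The target-shift witness ($T\in ch(\mathbf{S})$) sidesteps this subtlety entirely, because there even $P(T)$ fails to be stable, so \emph{no} graph pruning estimator of any kind exists. Apart from this definitional bookkeeping, the result is immediate from the two Lemmas plus Soundness.
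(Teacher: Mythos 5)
Your proof is correct and follows the paper's own route: the paper derives this corollary as ``immediate'' from the two preceding Lemmas, exactly as you do --- inclusion from Lemma \ref{lemma:superset} (plus soundness of Algorithm \ref{alg:surgery}), strictness from a separating witness. The one place you go beyond the paper is worth noting: you are right that Figure \ref{fig:diagnosis}(c) only witnesses the absence of a \emph{non-empty} stable conditioning set, while the empty-set estimator $P(T)$ is stable there, so under the literal reading that $\mathbf{Z}=\emptyset$ counts as a pruning estimator, that example does not by itself make the containment strict. The paper papers over this by implicitly adopting the non-empty convention of its second Lemma, but it does contain the cleaner separating observation you invoke --- in Section \ref{subsec:surgery} it notes that when $T\in\mathbf{M}$ (target shift) there is never a stable pruning estimator of any kind, empty set included, while surgery can still succeed by intervening on $T$ and predicting from its children (as in the paper's second simulated experiment, where $P_s\propto P(C\,|\,T,A)$ is stable but $P(T)$ is not). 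Your choice to make the target-shift witness primary closes this small definitional gap and yields a proof of strictness that is robust to either reading of ``stable pruning estimator.''
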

We have now shown that graph surgery strictly generalizes graph pruning.

\subsection{Surgery As Distributional Robustness}
We now discuss the optimality of the surgery estimator for adversarial transfer problems in the presence of unstable mechanisms.

Suppose selection ADMG $\mathcal{G}$ defines a prediction problem with target variable $T$ and input features $\mathbf{X} = \mathbf{O}\setminus \{T\}$. Further suppose all variables are continuous such that the prediction problem is regression and that we use the $L_2$ loss. Under classical assumptions that training and test distributions are the same, our goal is to learn a function $f(\mathbf{x})$ that minimizes the expected (squared) loss or \emph{risk}: $E_{\mathbf{o}\sim P(\mathbf{O})}[(t- f(\mathbf{x}))^2]$.

In our setting, however, $P(\mathbf{O})$ varies across domains. Recall that a selection diagram defines a family of distributions $\Gamma$ over $\mathbf{O}$ such that for any particular domain (i.e., setting of $\mathbf{S}$) there exists a $Q_\mathbf{s}\in\Gamma$ such that $P(\mathbf{O}|\mathbf{s})$ factorizes according to (\ref{eq:admg}) and members of $\Gamma$ differ in $\prod_{W\in\mathbf{M}} Q_\mathbf{s}(W|pa(W))$. As opposed to the classical setting, now our goal is to learn a predictor that optimizes for loss across the distributions in $\Gamma$. When constructing reliable models for safety-critical applications in which model failures can be dangerous, a natural choice is to minimize the worst-case or minimax risk across the environments. This can be written as the following game in which we seek the optimal $f$ from the set of continuous functions $\mathcal{C}^0$:
\begin{equation}\label{eq:minimax}
    \min_{f\in\mathcal{C}^0} \sup_{Q_\mathbf{s}\in\Gamma} E_{Q_\mathbf{s}}[(t-f(\mathbf{x}))^2].
\end{equation}

We now give two sufficient conditions under which using the surgery estimator is optimal in that $f_s(\mathbf{x})=E[T|\mathbf{x}\setminus\mathbf{m},do(\mathbf{m})]$ achieves (\ref{eq:minimax}). Proofs are in the supplement.

\begin{theorem}\label{thm:subset}
If $\mathcal{G}$ is such that $P_\mathbf{M}(T|\mathbf{X}\setminus\mathbf{M})$ is identified and equal to $P(T|\mathbf{W})$ for some $\mathbf{W}\subseteq \mathbf{X}$, then $f_s$ achieves (\ref{eq:minimax}).
\end{theorem}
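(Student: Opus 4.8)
The plan is to realize $f_s$ as a saddle-point strategy for the minimax game (\ref{eq:minimax}). The starting observation is that, by hypothesis, $f_s(\mathbf{x})=E_P[T\mid\mathbf{W}=\mathbf{w}]$ where $\mathbf{w}$ is the restriction of $\mathbf{x}$ to $\mathbf{W}\subseteq\mathbf{X}$, and that the conditional law of $T$ given $\mathbf{W}$ is the same in every environment. For the latter, write $P_\mathbf{M}(T\mid\mathbf{X}\setminus\mathbf{M})=P_\mathbf{M}\big(\{T\}\cup(\mathbf{X}\setminus\mathbf{M})\big)/P_\mathbf{M}(\mathbf{X}\setminus\mathbf{M})$: both numerator and denominator are unconditional interventional distributions intervening on $\mathbf{M}=ch_\mathcal{G}(\mathbf{S})$, hence stable by Proposition~1, so $P_\mathbf{M}(T\mid\mathbf{X}\setminus\mathbf{M})$ is the same distribution in every member of $\Gamma$. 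Since the query is identified, soundness of the ID algorithm makes the returned expression $P(T\mid\mathbf{W})$ equal to $P_\mathbf{M}(T\mid\mathbf{X}\setminus\mathbf{M})$ in \emph{every} causal model inducing $\mathcal{G}$, in particular in the model realizing a given $Q_\mathbf{s}\in\Gamma$. Combining, $Q_\mathbf{s}(T\mid\mathbf{W})=P(T\mid\mathbf{W})$ for every $Q_\mathbf{s}\in\Gamma$; the argument is pointwise in $\mathbf{s}$, so the same holds for any mixture of members of $\Gamma$. In particular $v(\mathbf{w}):=\mathrm{Var}_{Q_\mathbf{s}}(T\mid\mathbf{W}=\mathbf{w})$ is a fixed, environment-free function.

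The per-environment risk of $f_s$ then has a closed form: for any $Q_\mathbf{s}$ (or mixture), condition on $\mathbf{W}$ and decompose, $E_{Q_\mathbf{s}}[(t-f_s(\mathbf{x}))^2]=E_{Q_\mathbf{s}}[\mathrm{Var}_{Q_\mathbf{s}}(T\mid\mathbf{W})]+E_{Q_\mathbf{s}}\big[(E_{Q_\mathbf{s}}[T\mid\mathbf{W}]-E_P[T\mid\mathbf{W}])^2\big]$; the second term is zero by the invariance just established, so the risk of $f_s$ in $Q_\mathbf{s}$ equals $E_{Q_\mathbf{s}}[v(\mathbf{W})]$, a quantity that depends on $Q_\mathbf{s}$ only through the marginal law of $\mathbf{W}$ and is linear in $Q_\mathbf{s}$.

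To close the game I would exhibit a least-favorable environment. Since $Q\mapsto E_Q[(t-f(\mathbf{x}))^2]$ is linear, the supremum over $\Gamma$ equals that over the convex set of mixtures of members of $\Gamma$ for every $f$, so it suffices to find a mixture $Q^\star$ that is both worst-case for $f_s$ and makes $f_s$ its Bayes predictor. Take $Q^\star$ to be a mixture over ``pure-assignment'' sub-models: draw $\mathbf{m}^\dagger$ from a full-support distribution $\rho$ and then run the DGP with $\mathbf{M}$ set by direct assignment to $\mathbf{m}^\dagger$. In $Q^\star$, observing $\mathbf{X}$ pins down the sub-model used (the one with $\mathbf{M}$ equal to the $\mathbf{M}$-part of $\mathbf{x}$), and within that sub-model the remaining mechanisms are the training mechanisms, so $E_{Q^\star}[T\mid\mathbf{x}]$ is by definition the mean of $P_\mathbf{M}(T\mid\mathbf{X}\setminus\mathbf{M})$ at $\mathbf{x}$, i.e.\ $f_s(\mathbf{x})$. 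If $\rho$ is moreover chosen so that $E_{Q^\star}[v(\mathbf{W})]=\sup_{Q_\mathbf{s}\in\Gamma}E_{Q_\mathbf{s}}[v(\mathbf{W})]$, then for every $f\in\mathcal{C}^0$,
\begin{align*}
\sup_{Q_\mathbf{s}\in\Gamma}E_{Q_\mathbf{s}}[(t-f(\mathbf{x}))^2] &\ge E_{Q^\star}[(t-f(\mathbf{x}))^2] \ge E_{Q^\star}[(t-f_s(\mathbf{x}))^2] \\
&= E_{Q^\star}[v(\mathbf{W})] = \sup_{Q_\mathbf{s}\in\Gamma}E_{Q_\mathbf{s}}[(t-f_s(\mathbf{x}))^2],
\end{align*}
using monotonicity of the supremum, optimality of the Bayes predictor $f_s$ under $Q^\star$, and the previous paragraph; hence $f_s$ attains the minimum in (\ref{eq:minimax}).

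I expect the main obstacle to be the last claim: that a pure-assignment mixture --- for which $f_s$ is automatically Bayes --- can be chosen worst-case for $f_s$, i.e.\ that $\sup_\rho E_{Q^\star_\rho}[v(\mathbf{W})]$ equals $\sup_{Q_\mathbf{s}\in\Gamma}E_{Q_\mathbf{s}}[v(\mathbf{W})]$ over \emph{all} environments. This is immediate when $\mathbf{W}$ carries no latent confounding with $T$ through descendants of $\mathbf{S}$, but in general a $Q_\mathbf{s}\in\Gamma$ might, via such confounding, concentrate the marginal of $\mathbf{W}$ on high-$v$ configurations more effectively than any randomized assignment of $\mathbf{M}$. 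Resolving this needs either a structural consequence of the hypothesis $P_\mathbf{M}(T\mid\mathbf{X}\setminus\mathbf{M})=P(T\mid\mathbf{W})$ that excludes such confounding from the $\mathbf{W}$-marginal, or --- generalizing the ``$X=U$ versus $X=1-U$'' relabeling --- a direct lower bound $\sup_{Q_\mathbf{s}\in\Gamma}E_{Q_\mathbf{s}}[(t-f)^2]\ge\sup_{Q_\mathbf{s}\in\Gamma}E_{Q_\mathbf{s}}[v(\mathbf{W})]$ proved by symmetrizing over permutations of the mutable mechanisms; a compactness argument also enters if $\sup v$ is not attained.
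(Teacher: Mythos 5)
Your first two steps are sound and are consistent with what the paper itself uses: intervening on $\mathbf{M}=ch(\mathbf{S})$ gives a stable query (Proposition 1), identification lets you transfer the identity $P_\mathbf{M}(T|\mathbf{X}\setminus\mathbf{M})=P(T|\mathbf{W})$ to every member of $\Gamma$, hence $Q_\mathbf{s}(T|\mathbf{W})$ is environment-free and the risk of $f_s$ in any $Q_\mathbf{s}$ is $E_{Q_\mathbf{s}}[v(\mathbf{W})]$. The genuine gap is exactly the step you flag and leave open: your saddle-point strategy stands or falls on exhibiting a (near-)least-favorable environment of pure-assignment type, i.e.\ on $\sup_\rho E_{Q^\star_\rho}[v(\mathbf{W})]=\sup_{Q_\mathbf{s}\in\Gamma}E_{Q_\mathbf{s}}[v(\mathbf{W})]$, and nothing you write establishes this. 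It is not an innocuous technicality: in $\Gamma$ the mutable mechanisms $Q_\mathbf{s}(M|pa(M))$ may depend on \emph{unobserved} parents of $M$, so an adversary can correlate $\mathbf{M}$ with latent confounders and thereby realize $\mathbf{W}$-marginals (when $\mathbf{W}$ contains descendants of $\mathbf{M}$, or variables sharing latent parents with $\mathbf{M}$) that no exogenous randomization $\rho$ of $\mathbf{M}$ can reproduce; on top of that you would need a compactness/attainment argument even to speak of a maximizing $\rho$. The hypothesis $P_\mathbf{M}(T|\mathbf{X}\setminus\mathbf{M})=P(T|\mathbf{W})$ does constrain $\mathcal{G}$, but you never extract from it the structural consequence that would rule out such confounded worst cases, so the proof is incomplete at its load-bearing step.

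The paper's proof (following Theorem 4 of \citet{rojas2018invariant}) sidesteps the need for a least-favorable environment entirely by a per-environment coupling: for each fixed $\mathbb{Q}\in\Gamma$ it constructs a companion $\mathbb{P}\in\Gamma$ by replacing every mutable mechanism $q(M|pa(M))$ with the marginal $q(M)$. In $\mathbb{P}$ the observational conditional coincides with the interventional one, so the Bayes predictor under $\mathbb{P}$ is $f_s$, and because $p(t,\mathbf{w})=q(t,\mathbf{w})$ while $f_s$ depends on $\mathbf{x}$ only through $\mathbf{w}$, one obtains $E_\mathbb{P}[(t-f(\mathbf{x}))^2]\ \ge\ E_\mathbb{Q}[(t-f_s(\mathbf{x}))^2]$ for every competitor $f$; taking suprema over $\mathbb{Q}$ gives the minimax claim with no worst-case environment, no attainment argument, and no claim that assignment-type environments are adversarially optimal against $f_s$. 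If you want to salvage your route, this per-$\mathbb{Q}$ pairing is the missing idea: match each adversarial $\mathbb{Q}$ with its own randomized-assignment environment (whose $\mathbf{M}$-marginals are those of $\mathbb{Q}$) rather than searching for a single worst-case $\rho$.
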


\begin{theorem}\label{thm:frontdoor}
If $\mathcal{G}$ is such that $P_\mathbf{M}(T|\mathbf{X}\setminus\mathbf{M})$ is identified and not a function of $\mathbf{M}$, then $f_s$ achieves (\ref{eq:minimax}).
\end{theorem}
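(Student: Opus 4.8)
The plan is to produce a saddle point $(f_s, Q^*)$ for the minimax game in~\eqref{eq:minimax}: an environment $Q^*$ in (the closure of) $\Gamma$ against which $f_s$ is a best response and at which $f_s$ simultaneously attains its worst-case risk over $\Gamma$, so that $\sup_{Q}E_Q[(T-f)^2]\ge E_{Q^*}[(T-f)^2]\ge E_{Q^*}[(T-f_s)^2]=\sup_{Q}E_Q[(T-f_s)^2]$ for every $f\in\mathcal{C}^0$. The first ingredient is that, by hypothesis, $P_\mathbf{M}(T\mid\mathbf{X}\setminus\mathbf{M})$ is identified and does not depend on the intervened values $\mathbf{m}$, so $f_s(\mathbf{x})=E_{P_\mathbf{M}}[T\mid\mathbf{X}\setminus\mathbf{M}=\mathbf{x}\setminus\mathbf{m}]$ is a function of $\mathbf{x}\setminus\mathbf{m}$ only; moreover, since intervening on $\mathbf{M}\supseteq ch(\mathbf{S})$ deletes every mutable factor from~\eqref{eq:admg}, the conditional law $\pi(t\mid\mathbf{x}\setminus\mathbf{m}):=P_\mathbf{M}(t\mid\mathbf{x}\setminus\mathbf{m})$ is invariant over $\Gamma$ (the same reasoning used to establish that $P_\mathbf{X}(\mathbf{Y})$ is stable).

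For the best-response half, I would take as candidate least-favorable environments the degenerate ones that set the mutable variables to a constant, $do(\mathbf{M}=\mathbf{m})$ (members of $\overline\Gamma$ with deterministic mutable mechanisms). In such an environment the law of the non-mutable variables $\mathbf{O}\setminus\mathbf{M}$ is exactly the interventional law $P_\mathbf{m}$ and $\mathbf{M}$ is constant, so $Q(T\mid\mathbf{X})=P_\mathbf{m}(T\mid\mathbf{X}\setminus\mathbf{M})=\pi(T\mid\mathbf{X}\setminus\mathbf{M})$ by the ``not a function of $\mathbf{M}$'' hypothesis; hence $E_Q[T\mid\mathbf{X}]=f_s(\mathbf{X})$ and $f_s$ is the $L_2$-Bayes predictor under every such $Q$, giving $E_Q[(T-f_s)^2]\le E_Q[(T-f)^2]$ for all $f\in\mathcal{C}^0$.

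The second, harder, half is to show the worst-case risk of $f_s$ over all of $\Gamma$ is attained among these degenerate environments, i.e.\ that an adversary gains nothing by keeping $\mathbf{M}$ random and entangled with the latent sources it shares with $T$. Because $f_s$ depends on $\mathbf{x}$ only through $\mathbf{x}\setminus\mathbf{m}$, its risk depends on $Q$ only through the joint law of $(T,\mathbf{X}\setminus\mathbf{M})=\mathbf{O}\setminus\mathbf{M}$, and I would write $E_Q[(T-f_s)^2]=E_Q\big[E_Q[(T-f_s(\mathbf{X}\setminus\mathbf{M}))^2\mid\mathbf{X}\setminus\mathbf{M}]\big]$ and argue (a) that, for each value of $\mathbf{X}\setminus\mathbf{M}$, the inner conditional risk is no larger than what is achieved when $\mathbf{M}$ is generated by a mechanism decoupled from those latent sources; and (b) that, with the mutable mechanisms so pinned, the remaining freedom in the marginal of $\mathbf{M}$ lets the outer expectation range over convex combinations of $\{P_\mathbf{m}(\mathbf{X}\setminus\mathbf{M})\}_\mathbf{m}$, so its maximum is $\sup_\mathbf{m} E_{P_\mathbf{m}}[(T-f_s)^2]$, attained at a degenerate environment $Q^*$. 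Together with the best-response half this closes the saddle point; Theorem~\ref{thm:subset} should follow from the same template with $\mathbf{W}$ in place of $\mathbf{X}\setminus\mathbf{M}$.

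The main obstacle is step~(a): precisely controlling how an adversarial choice of mutable mechanism can reshape the conditional law of $T$ given the non-mutable features, and in particular showing that hidden confounding between $\mathbf{M}$ and $T$ cannot be turned against $f_s$ beyond what deterministic $do(\mathbf{m})$ already does. This is exactly where the hypothesis must be used at full strength — ``$P_\mathbf{M}(T\mid\mathbf{X}\setminus\mathbf{M})$ not a function of $\mathbf{M}$'', rather than merely identified and $\mathbf{M}$-dependent as in the diagnosis example — and I expect it to require a careful analysis of how the ID-expression for $\pi$ constrains the way $\mathbf{M}$'s realized value can covary with $T$, or else a mild regularity condition on the family $\Gamma$.
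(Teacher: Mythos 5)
There is a real gap, and it is exactly at the place you flagged: step (a) is not merely hard, it is false, even with the ``not a function of $\mathbf{M}$'' hypothesis at full strength, so the saddle point you propose (with $Q^*$ a degenerate $do(\mathbf{M}=\mathbf{m})$ environment) does not exist in general. Concretely, take the front-door selection ADMG from the supplement ($S\rightarrow M\rightarrow Z\rightarrow T$, $M\leftrightarrow T$), let $U\sim\mathrm{Bern}(1/2)$ be the latent behind $M\leftrightarrow T$, let the mutable mechanism generate $M$ from $U$, let $P(Z=M\mid M)=0.9$, and let $T=\epsilon$ if $Z=U$ and $T=0$ otherwise, with $\epsilon\sim\mathcal{N}(0,1)$. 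Then $P_\mathbf{M}(T\mid Z)$ is identified (front-door) and equals $\tfrac12\mathcal{N}(0,1)+\tfrac12\delta_0$ for every $(z,m)$, so the hypothesis holds and $f_s\equiv 0$. For a mechanism $Q(M\mid U)$ the risk of $f_s$ is $P(Z=U)$, which is $0.5$ for every constant-$\mathbf{m}$ (indeed every $U$-independent) mechanism but $0.9$ for the coupled mechanism $M:=U$, which is a legitimate member of $\Gamma$ since the mechanism $P(M\mid pa(M))$ may vary arbitrarily and $U\in pa(M)$. So the adversary strictly gains by entangling $\mathbf{M}$ with the latent it shares with $T$: the worst case for $f_s$ is not attained (nor approached) at your degenerate environments, the law of $(T,\mathbf{X}\setminus\mathbf{M})$ under coupling is not in the convex hull of $\{P_\mathbf{m}\}$ (so step (b) also fails), and the equality $E_{Q^*}[(T-f_s)^2]=\sup_Q E_Q[(T-f_s)^2]$ that closes your chain is false. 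Your best-response half (that $f_s$ is Bayes under each $do(\mathbf{M}=\mathbf{m})$ environment) is fine; it is the other half that collapses.

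Note that the theorem's conclusion survives in this example, but for a different reason: in the coupled environment one still has $E_Q[T\mid Z,M]=0=f_s$, so $f_s$ remains Bayes there, and its Bayes risk $0.9$ lower-bounds every competitor's worst case --- the saddle sits at a confounder-coupled environment, not a degenerate one. So if you keep the saddle-point template, the least favorable $Q^*$ must be allowed to couple $\mathbf{M}$ to the latents, and the missing work is precisely to show that $f_s$ is still a best response at such worst-case environments; that is where the ``not a function of $\mathbf{M}$'' assumption has to do its job. The paper's own route is different: for each competitor $f$ and each $\mathbb{Q}\in\Gamma$ it constructs a decoupled comparison environment $\mathbb{P}(\mathbb{Q})\in\Gamma$ (mutable conditionals replaced by their $\mathbb{Q}$-marginals), uses that in $\mathbb{P}$ the observational conditional coincides with the stable interventional conditional so the conditional-mean step applies with $f_s$, and then re-expresses the bound in terms of the risk of $f_s$ under $\mathbb{Q}$; be aware that this last re-identification is exactly where coupling with latents bites (in the example above the displayed inequality $\int(t-f)^2\,d\mathbb{P}\geq\int(t-f_s)^2\,d\mathbb{Q}$ appears to fail for $f=f_s$ and $\mathbb{Q}$ the coupled environment), so neither your route nor a naive reading of the paper's avoids a careful treatment of confounder-coupled mechanisms.
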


In the case of Theorem \ref{thm:subset}, the surgery estimator reduces to using an invariant subset of features to predict and the result follows from the optimality of graph pruning methods for distributions which share the invariant conditional \citep[Theorem 4]{rojas2018invariant}. Theorem \ref{thm:frontdoor}, however, can correspond to certain cases in which $P_\mathbf{M}(T|\mathbf{X}\setminus\mathbf{M})$ is not an observational conditional distribution. One example is the so-called \emph{front-door} graph \citep{pearl2009causality}, in which there exists no stable pruning estimator while the surgery estimator is optimal across all predictors. Further discussion of this case is in the supplement.

We finally discuss the surgery estimator in the context of stable estimators. Stability is a particularly desirable property because, assuming the estimator is identified, stable estimators can be learned using data from any environment in $\Gamma$. Further, we have the following (proof in supplement):
\begin{theorem}\label{thm:opt-stable}
The surgery estimator is optimal amongst the set of directly transportable statistical or causal relations for predicting $T$.
\end{theorem}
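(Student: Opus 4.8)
The plan is to reduce the statement to three ingredients. First, by the semantics of selection diagrams \citep{pearl2011transportability,bareinboim2012transportability}, a statistical or causal relation $R$ is \emph{directly transportable} precisely when the quantity it names is the same whether evaluated from the source law or from any target law compatible with $\mathcal{G}$; since the family $\Gamma$ of such laws is indexed by $\mathbf{S}$ and its members agree on every factor of (\ref{eq:admg}) except the mutable ones $\{Q_\mathbf{s}(W\mid pa(W))\}_{W\in\mathbf{M}}$, this is exactly the requirement that $R$ be independent of $\mathbf{S}$, i.e.\ stable in the sense of the definition of a stable estimator. Hence ``directly transportable relations for predicting $T$'' is the same class as ``stable relations for predicting $T$,'' and by the stability of interventions on supersets of $\mathbf{M}$ (Proposition 1) together with the ID algorithm the surgery estimator is a member of this class whenever Algorithm \ref{alg:surgery} succeeds.

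Second, I would show that every stable relation $R$ is a functional of the post-surgery joint distribution $P_\mathbf{M}(\mathbf{O}\setminus\mathbf{M})$ (with $T$ additionally moved into the intervention set when $T\in\mathbf{M}$, matching Algorithm \ref{alg:surgery}). The argument is that holding the non-mutable factors and $P(\mathbf{U})$ fixed while letting the mutable factors range over all admissible choices sweeps out a subfamily of $\Gamma$ on which stability forces $R$ to be constant; hence $R$ cannot depend on the mutable factors, so it depends only on the non-mutable factors and $P(\mathbf{U})$, whose product marginalized over $\mathbf{U}$ is, by the factorization (\ref{eq:admg}) of the mutilated graph $\mathcal{G}_{\overline{\mathbf{M}}}$, exactly $P_\mathbf{M}(\mathbf{O}\setminus\mathbf{M})$. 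Consequently the predictor $f_R$ induced by any directly transportable $R$ is a fixed, environment-independent function of $\mathbf{x}$ that is measurable with respect to the post-surgery law.

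Third, I would invoke an $L_2$-projection argument: $f_s(\mathbf{x}) = E[T\mid \mathbf{x}\setminus\mathbf{m}, do(\mathbf{m})]$ is by construction the conditional mean of $T$ under $P_\mathbf{M}(T\mid\mathbf{X}\setminus\mathbf{M})$, i.e.\ the $L_2$-optimal predictor of $T$ among functions measurable with respect to the post-surgery law. Since by the previous step every competing directly transportable predictor is such a function, $f_s$ has no larger risk than any competitor when risk is measured against the stable conditional law of $T$; transferring this to the worst-case risk over $\Gamma$---the game (\ref{eq:minimax}) restricted to the directly transportable class---then uses that every competitor, being blind to the mutable mechanisms the adversary controls, can be punished by the adversary at least as severely as $f_s$ can, so $f_s$ attains the restricted minimax value and is the asserted optimum.

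\textbf{Main obstacle.} The delicate steps are the second and the transfer inside the third. For the second one must rule out ``conspiratorial'' dependence in which $R$ depends on a mutable factor yet stays constant because of constraints tying the mutable factors together; this needs the irredundancy/genericity of $\Gamma$ (varying one mutable factor with all else fixed genuinely moves the source law) and, for relations that are only identified on part of the model, a careful reading relative to the identified form that Algorithm \ref{alg:surgery} actually returns. Uniformity over the broad class of ``statistical or causal'' relations---interventional and counterfactual queries on arbitrary variable sets---is cleanest to obtain by appealing to the soundness and completeness of $do$-calculus and the ID machinery rather than by case analysis. Finally, the minimax transfer in the third step is the genuine analogue of what Theorems \ref{thm:subset} and \ref{thm:frontdoor} must establish with extra structural assumptions in the \emph{unrestricted} game; restricting competitors to the directly transportable class should make it go through unconditionally, but formalizing ``the adversary can punish any stable competitor at least as much as $f_s$'' is where the real work lies.
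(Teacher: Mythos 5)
Your proposal reads the theorem as a minimax-risk statement and tries to prove that $f_s(\mathbf{x})=E[T\mid\mathbf{x}\setminus\mathbf{m},do(\mathbf{m})]$ attains the game (\ref{eq:minimax}) restricted to stable competitors. That is both a different claim from the one the paper proves and, as you yourself flag, the part you never establish. The paper's notion of optimality here is much more modest and operational: the class of directly transportable \emph{statistical} relations consists of stable conditionals $P(T\mid\mathbf{Z})$, which by Lemma \ref{lemma:superset} and its corollary are themselves conditional interventional distributions of the form $P_\mathbf{M}(T\mid\mathbf{W})$; the directly transportable \emph{causal} relations are exactly the stable conditional interventional distributions; and Algorithm \ref{alg:surgery} is an exhaustive search over precisely this class (all conditioning sets, with $T$ moved into the intervention set when required) that returns the member with the lowest loss. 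Optimality within the class then follows immediately from exhaustiveness plus the selection step --- no $L_2$-projection or adversarial transfer argument is needed, and none is claimed.

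The concrete gaps in your route are therefore twofold. First, your step three conflates ``the surgery estimator'' with the full conditional interventional predictor $f_s$; the estimator Algorithm \ref{alg:surgery} returns may use a different conditioning set (the full one need not even be identified), so an argument pinned to $f_s$ does not prove the theorem as stated. Second, the unconditional minimax claim you want --- that within the stable class the adversary can punish any competitor at least as much as $f_s$ --- is exactly what the paper only obtains under the additional structural hypotheses of Theorems \ref{thm:subset} and \ref{thm:frontdoor}. Without such hypotheses, $E[T\mid\mathbf{x}\setminus\mathbf{m},do(\mathbf{m})]$ need not coincide with $E_{Q_\mathbf{s}}[T\mid\mathbf{x}]$ in any environment, and the adversary's choice of mutable mechanisms reweights the covariate distribution, so the ``conditional mean under the post-surgery law'' projection does not transfer to worst-case risk; you would be trying to prove something the paper deliberately does not assert at this level of generality. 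Your step one (directly transportable $=$ stable, and surgery estimators belong to the class) is consistent with the paper, but steps two and three as written leave the theorem unproven, whereas the intended proof is a short reduction-plus-exhaustive-search argument.
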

We have discussed cases in which the surgery estimator is minimax optimal across the environments in $\Gamma$ and established that the surgery estimator is the optimal stable predictor. In the context of Theorem \ref{thm:opt-stable}, this means that without additional assumptions (e.g., parametric assumptions about forms of causal mechanisms or assumptions about the magnitude of differences across environments) the surgery estimator provides the best method for prediction that can be trained from and used in any environment.

\section{EXPERIMENTS}
We evaluate the graph surgery estimator in proactive transfer settings in which data from the target distribution is unavailable. The goal of our experiments is to demonstrate that the surgery estimator is stable in situations in which existing methods are either not applicable or suboptimal. To this end, we first consider a simulated experiment for which the true selection diagram is known. Then we apply the surgery estimator to real data to demonstrate its practical utility even when the true selection diagram is unknown. We compare against a naive pooled ordinary least squares baseline (OLS) and causal transfer learning (CT), a state-of-the-art pruning approach \citep{rojas2018invariant}.\footnote{\url{https://github.com/mrojascarulla/causal_transfer_learning}} If CT fails to find a non-empty subset, we predict using the pooled source data mean. On real data we also compare against Anchor Regression (AR), a distributionally robust method for bounded magnitude shift interventions \citep{rothenhausler2018anchor} which requires a causal ``anchor'' variable with no parents in the graph. All performance is measured using mean squared error (MSE).

\subsection{Simulated Data}
We simulate data from zero-mean linear Gaussian systems using the DAG in Figure \ref{fig:diagnosis}(a) considering two variations (full details in supplement).\footnote{This DAG contains no anchor so we cannot apply AR.} The first considers the selection problem in Figure \ref{fig:diagnosis}(a) in which $A$ is a mutable variable, defining a family of DGPs which vary in the coefficient of $K$ in the structural equation for $A$. We generate 10 source environments and apply on test environments in which we vary the coefficient on a grid. Recall that in this DAG the empty set is the only stable conditioning set and CT should model $P(T)$. While this is stable, we expect the performance to be worse than that of the surgery estimator: $P_s \propto P_{A}(T,C) = P(C|T,A)P(T)$ which is able to use additional stable factors.

The MSE as we vary the test coefficient of $K$ is shown in Figure \ref{fig:sim-results}a. As expected, the stable models CT and Surgery are able to generalize beyond the training environments (vertical dashed lines), while the unstable OLS loss grows quickly. However, for small deviations from the training environment OLS outperforms the stable methods which shows that there is a tradeoff between stability and performance in and near the training environment. The gap between CT and Surgery is expected since Surgery models an extra stable, informative factor: $P(C|T,A)$.

We repeat this experiment but consider the target shift scenario in which $T$ is the mutable variable, and the DGPs across environments differ in the coefficient of $K$ in the structural equation for $T$. Now there is no stable conditioning set which violates the assumption of CT. Again, CT used the empty conditioning set $P(T)$ but in this case is unstable so the loss grows quickly in Figure \ref{fig:sim-results}b. As before, OLS is unstable but performs best near the source environments. The surgery estimator $P_s \propto P_{TA}(C) = P(C|T,A)$ is stable and the loss appears constant compared to the unstable alternatives. These experiments demonstrate that stability is an important property when differences in mechanisms can be arbitrarily large. In the supplement we aggregate results for many repetitions of the simulations.

\begin{figure}[!t]
\vspace{-0.2in}
\begin{center}
  \subfloat[]{
    \includegraphics[width=0.22\textwidth]{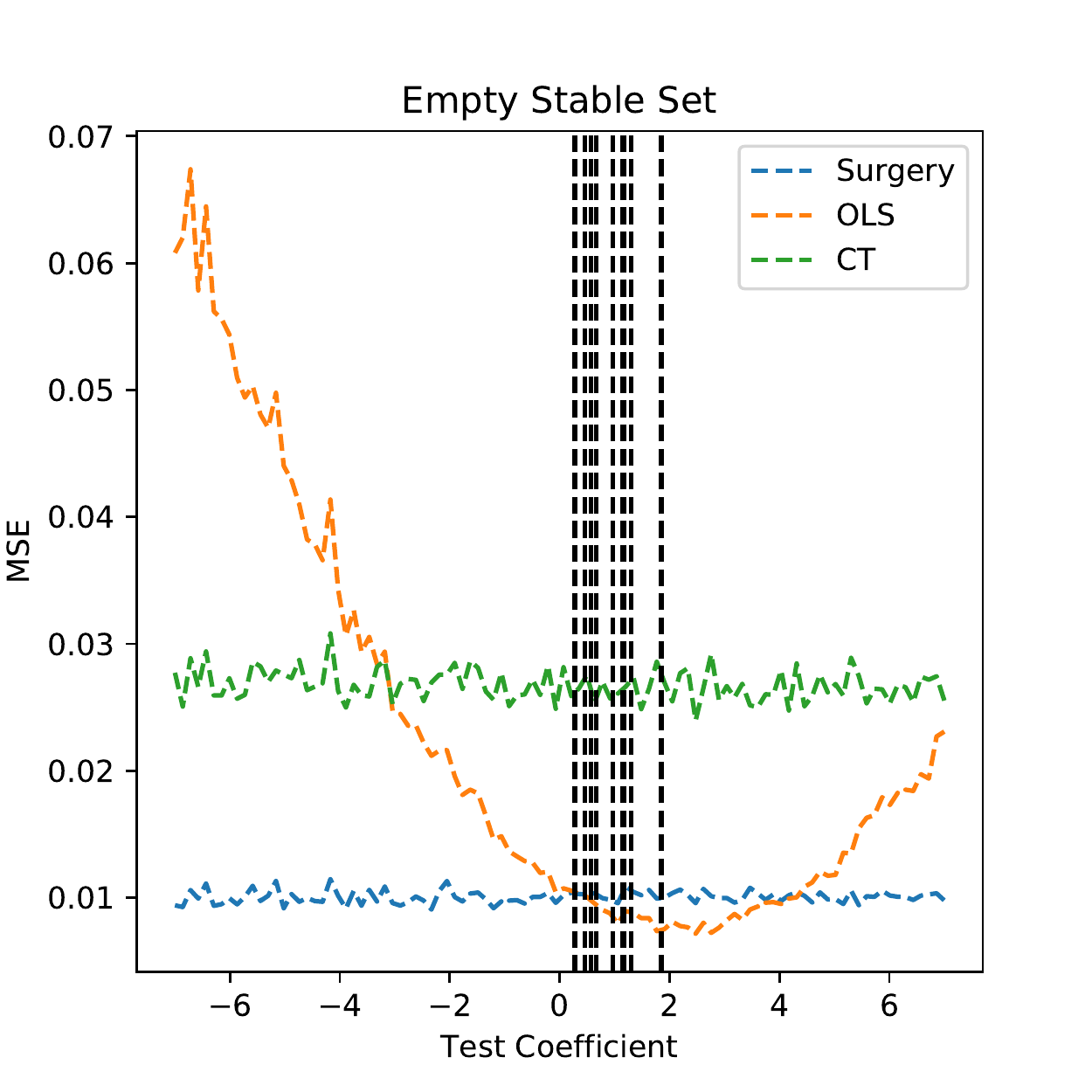}
    }
  \subfloat[]{
    \includegraphics[width=0.22\textwidth]{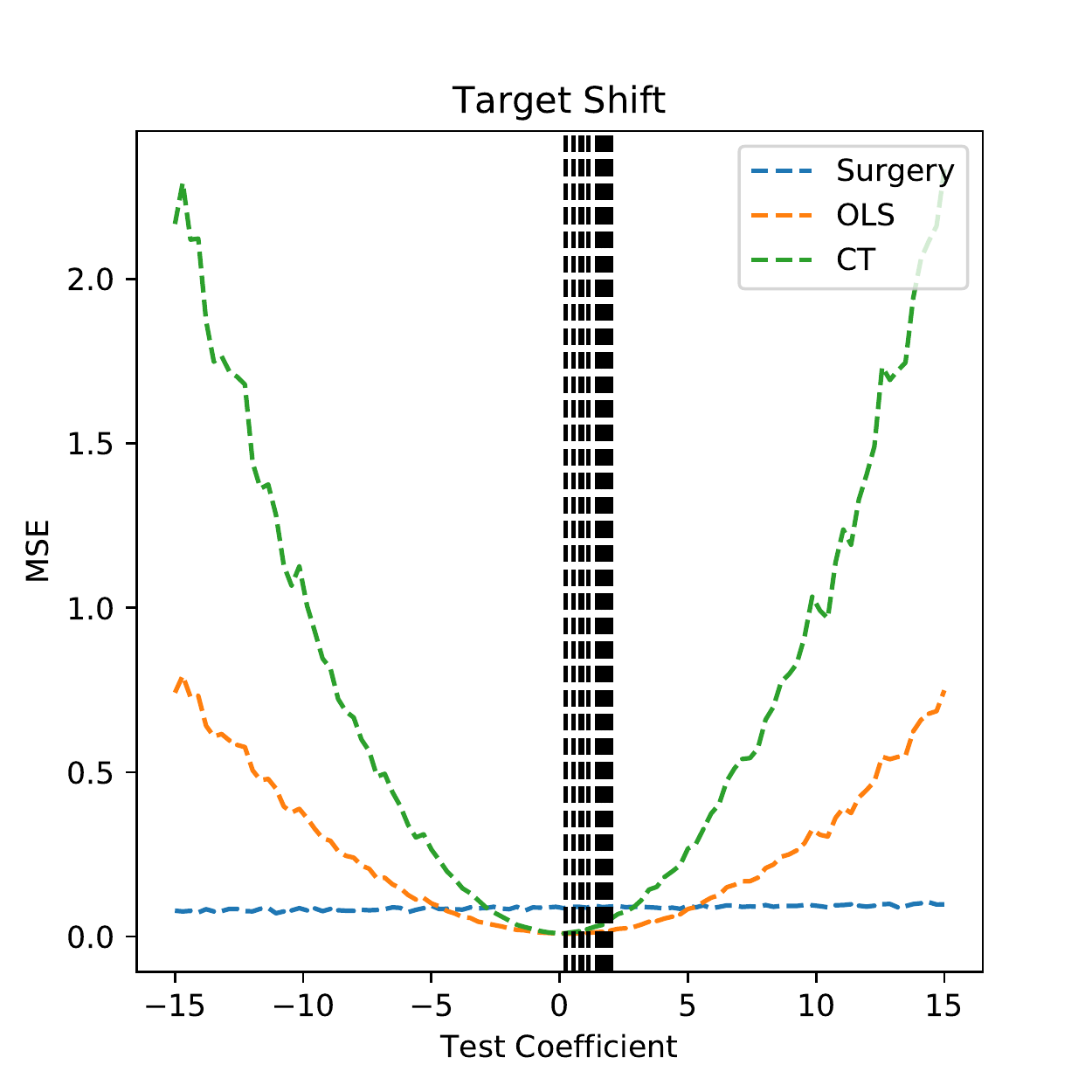}
    }
\end{center}
\caption{(a) MSE in test environments for the Fig \ref{fig:diagnosis}a scenario. (b) MSE in test environment for target shift scenario. Vertical lines denote training environments.}
\label{fig:sim-results}
\vspace{-0.2in}
\end{figure}

\subsection{Real Data: Bike Rentals}
Following \citet{rothenhausler2018anchor} we use the UCI Bike Sharing dataset \citep{fanaee2013,uci} in which the goal is to predict the number of hourly bike rentals $R$ from weather data including temperature $T$, feeling temperature $F$, wind speed $W$, and humidity $H$. As in \citet{rothenhausler2018anchor}, we transform $R$ from a count to continuous variable using the square root. The data contains 17,379 examples with temporal information such as season and year. We partition the data by season (1-4) and year (1-2) to create environments with different mechanisms. We posit the causal diagram in Figure \ref{fig:examples}(b) with confounding caused by unobserved temporal factors, and hypothesize that differences in season result in unstable mechanisms for weather: the mutable variables are $\mathbf{M}=\{H,T,W\}$. If this diagram is true, then no stable pruning estimator exists, so we expect surgery to outperform CT and OLS if the differences in mechanisms are large. The full conditional interventional distribution $P_{THW}(R|F)=P_{THWF}(R)$ is identified and the surgery estimator is given by $\hat{R}_s = \sum_T E[R|T,H,W,F]P(T|H,W)$. We posit linear Gaussians for each term and compute $\hat{R}_s$ using 10,000 Monte Carlo samples. Since AR and CT require data from multiple source environments, for each year (Y), we select one season as the target environment, using the other three seasons as source environments. Since OLS and Surgery do not make use of the season indicator, we simply pool the data for these methods.

We sample $80\%$ of the training/test data 20 times and report the average MSE in Table \ref{table:bike-results} (intervals are one standard error). The surgery estimator performs competitively, achieving the lowest average MSE in 3 of 8 test cases.  When the OLS MSE is high (seasons 3 and 4 in each year), Surgery tends to outperform it which we attribute to Surgery's stability. We also see that CT tends to perform poorly which lends some credibility to our hypothesized selection diagram which dictates that no stable pruning estimator exists. AR's very good performance is expected, since the shift-perturbation assumption seems reasonable in this problem. However, AR requires tuning of a hyperparameter for the maximum magnitude shift perturbation to protect against which is less preferable than stable estimators such as surgery in safety critical applications when the target environment is unknown and could be very different from the source.

\begin{table}[!t]
\caption{MSE on the Bike Sharing Dataset} \label{table:bike-results}
\centering
\scalebox{0.7}{
\begin{tabular}{ccccc}
\hline
Test Data       & OLS           & AR            & CT            & Surgery       \\ \hline
(Y1) Season 1 & 20.8$\pm$0.10 & \textbf{20.5}$\pm$0.10 & 42.2$\pm$2.04 & 20.7$\pm$0.36 \\ 
Season 2 & \textbf{23.2}$\pm$0.05 & \textbf{23.2}$\pm$0.05 & 29.9$\pm$0.09 & 23.8$\pm$0.09 \\
Season 3 & 32.2$\pm$0.14 & 31.4$\pm$0.13 & 32.2$\pm$0.14 & \textbf{29.9}$\pm$0.26 \\
Season 4 & 29.2$\pm$0.08 & 29.1$\pm$0.08 & 29.1$\pm$0.08 & \textbf{28.2}$\pm$0.07 \\ \hline
(Y2) Season 1 & 32.5$\pm$0.11 & \textbf{32.2}$\pm$0.11 & 32.6$\pm$0.15 & 36.1$\pm$0.37 \\
Season 2 & 39.3$\pm$0.11 & \textbf{39.2}$\pm$0.11 & 46.1$\pm$0.12 & 39.5$\pm$0.13 \\
Season 3 & 47.7$\pm$0.17 & \textbf{46.7}$\pm$0.16 & 48.2$\pm$0.22 & 54.8$\pm$0.73 \\
Season 4 & 46.2$\pm$0.16 & 46.0$\pm$0.16 & 46.1$\pm$0.16 & \textbf{44.4}$\pm$0.16 \\ \hline
\end{tabular}
}
\end{table}

\section{CONCLUSION}
Since the very act of deployment can result in shifts that bias a system in practice (e.g., \citet{lum2016predict}), machine learning practitioners need to become increasingly aware of how deployment and training environments can differ. To this end, we have introduced a framework for failure-proofing against and reasoning about threats to reliability due to shifts in environment.\footnote{For an overview of reliability in machine learning and other types of threats see \citet{saria2019tutorial}.} As a means for failure prevention, we have used selection diagrams to identify and express desired invariances to changes in the DGP, and the Surgery estimator as an approach for learning a model that satisfies the invariance specifications. The surgery estimator algorithm serves as a preprocessing step that dictates which pieces of the DGP to model, and subsequently these pieces can be fit using arbitrarily complex models. The surgery estimator has a number of desirable properties, including that it is strictly more applicable than existing graph pruning approaches and it is the optimal stable estimator from a distributionally robust perspective. In future work we wish to consider the tradeoff between stability and performance so as to enable practitioners to make better informed decisions about which invariances to enforce.



\subsubsection*{Acknowledgements}
The authors thank Thijs van Ommen for helpful discussions about section 4.2. 

\bibliography{references}

\onecolumn
\newpage
\aistatstitle{Appendix: Learning Predictive Models That Transport}
\appendix
\section{ID Algorithm}\label{appendix:id}
\begin{algorithm}[h]
\SetKwFunction{FIdentify}{Identify}
\SetKwProg{Fn}{Function}{:}{}
 \SetKwInOut{Input}{input}\SetKwInOut{Output}{output}
 \Input{ADMG $\mathcal{G}$, disjoint variable sets $\mathbf{X},\mathbf{Y}\subset \mathbf{O}$}
 \Output{Expression for $P_\mathbf{X}(\mathbf{Y})$ if identified or \texttt{FAIL} if not identified.}
 1. $\mathbf{D}=an_{\mathcal{G}_{\mathbf{O}\setminus\mathbf{X}}}(\mathbf{Y})$\;
 2. Let c-components of $\mathcal{G}_\mathbf{D}$ be $\mathbf{D}_i$, $i=1,\dots,k$\;
 3. $P_\mathbf{X}(\mathbf{Y})=\sum_{\mathbf{D}\setminus \mathbf{Y}}\prod_{i=1}^k \texttt{Identify}(\mathbf{D}_i, \mathbf{O}, P(\mathbf{O}))$\;
 \Fn{\FIdentify{$\mathbf{A}$, $\mathbf{V}$, $Q=Q[\mathbf{V}]$}}{
 \If{$\mathbf{A}==\mathbf{V}$}{\KwRet $Q[\mathbf{V}]$\;}
 \tcc{$C_{\mathcal{G}_\mathbf{V}}(B)$ is c-component of $B$ in $\mathcal{G}_\mathbf{V}$}
 \If{$\exists B \in \mathbf{V}\setminus \mathbf{A}$ such that $C_{\mathcal{G}_\mathbf{V}}(B)\cap ch(B)=\emptyset$}{
 Compute $Q[\mathbf{V}\setminus \{B\}]$ from $Q$\ (Corollary 1)\;
 \KwRet \FIdentify{$\mathbf{A}$, $\mathbf{V}\setminus \{B\}$, $Q[\mathbf{V}\setminus \{B\}]$}\;
 }
 \Else{\KwRet \texttt{FAIL}$(\mathbf{A}, \mathcal{G}_\mathbf{V})$\;}
 }
 \caption{ID($\mathbf{X}$, $\mathbf{Y}$; $\mathcal{G}$)}
 \label{alg:id}
\end{algorithm}

We now restate the identification algorithm (ID) \citep{tian2002general,shpitser2006identification} using the modified presentation in \citet{jabercausal2018}. When the interventional distribution of a set of variables is identified, the ID algorithm returns it in terms of observational distributions (i.e., if the intervention is represented using $do$ notation, then the resulting expression contains no $do$ terms). The ID algorithm is complete \citep{shpitser2006identification}, so if the interventional distribution is not identifiable, then the algorithm throws a failure exception. Note that $\mathcal{G}_\mathbf{V}$ denotes an \emph{induced subgraph} which consists of only the variables in $\mathbf{V}$ and the edges between variables in $\mathbf{V}$.

We will need the following definition:
\begin{definition}[C-component]
In an ADMG, a c-component consists of a maximal subset of observed variables that are connected to each other through bidirected paths. A vertex with no incoming bidirected edges forms its own c-component.
\end{definition}
We also restate the following Corollary \cite[Corollary 1]{jabercausal2018}:
\begin{corollary}
Given an ADMG $\mathcal{G}$ with observed variables $\mathbf{O}$ and unobserved variables $\mathbf{U}$, $V \in \mathbf{X} \subseteq \mathbf{O}$, and $P_{\mathbf{O}\setminus \mathbf{X}}$, if $V$ is not in the same c-component with a child of $V$ in $\mathcal{G}_\mathbf{X}$, then $Q[\mathbf{X}\setminus \{V\}]$ is identifiable and is given by 
\begin{equation*}
    Q[\mathbf{X}\setminus\{V\}]=\frac{P_{\mathbf{O} \setminus \mathbf{X}}}{Q[C(V)]} \sum_V Q[C(V)],
\end{equation*}
where $C(V)$ denotes the c-component of $V$ in the induced subgraph $\mathcal{G}_\mathbf{X}$.
\end{corollary}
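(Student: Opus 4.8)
The plan is to prove the identity in three moves: decompose both $Q[\mathbf{X}]$ and $Q[\mathbf{X}\setminus\{V\}]$ via the c-component factorization, isolate and cancel the factors shared by the two decompositions, and finally use the hypothesis that $V$ shares no c-component with any of its children to collapse the remaining marginalization into the plain sum $\sum_V Q[C(V)]$. First I would invoke the c-component factorization of \citet{tian2002general}: for any $\mathbf{S}\subseteq\mathbf{O}$, if $\mathbf{S}_1,\dots,\mathbf{S}_l$ are the c-components of the induced subgraph $\mathcal{G}_\mathbf{S}$, then $Q[\mathbf{S}]=\prod_{i=1}^l Q[\mathbf{S}_i]$ and each factor $Q[\mathbf{S}_i]$ is computable from $Q[\mathbf{S}]$. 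Applying this to $\mathbf{S}=\mathbf{X}$ writes $Q[\mathbf{X}]=Q[C(V)]\prod_{j\neq 1}Q[\mathbf{C}_j]$, where $\mathbf{C}_1=C(V),\mathbf{C}_2,\dots$ are the c-components of $\mathcal{G}_\mathbf{X}$; in particular $Q[C(V)]$ is identifiable from $Q[\mathbf{X}]=P_{\mathbf{O}\setminus\mathbf{X}}$.

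Next I would observe that deleting the single vertex $V$ from $\mathbf{X}$ leaves every c-component $\mathbf{C}_j$ with $j\neq 1$ untouched and only fragments $C(V)$ into the c-components of $\mathcal{G}_{C(V)\setminus\{V\}}$, since the induced subgraph on $C(V)\setminus\{V\}$ is the same whether read off from $\mathcal{G}_\mathbf{X}$ or $\mathcal{G}_{\mathbf{X}\setminus\{V\}}$. Factorizing $Q[\mathbf{X}\setminus\{V\}]$ the same way gives $Q[\mathbf{X}\setminus\{V\}]=\big(\prod_{j\neq1}Q[\mathbf{C}_j]\big)\,Q[C(V)\setminus\{V\}]$, and matching the common factors $\prod_{j\neq1}Q[\mathbf{C}_j]=Q[\mathbf{X}]/Q[C(V)]$ yields $Q[\mathbf{X}\setminus\{V\}]=\frac{P_{\mathbf{O}\setminus\mathbf{X}}}{Q[C(V)]}\,Q[C(V)\setminus\{V\}]$. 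It then remains only to identify $Q[C(V)\setminus\{V\}]$ with $\sum_V Q[C(V)]$.

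The crux is this last identification, and this is exactly where the hypothesis enters. Expanding $Q[C(V)]$ by the truncated factorization in the underlying latent-variable DAG gives $Q[C(V)]=\sum_{\mathbf{U}}\prod_{V_i\in C(V)}P(v_i\mid pa(v_i))\,P(\mathbf{U})$, where the observed variables outside $C(V)$ are held at their intervened values. The assumption $C(V)\cap ch(V)=\emptyset$ guarantees that $v$ never occurs as a conditioning argument $pa(v_i)$ of any surviving factor, so $v$ appears only in its own factor $P(v\mid pa(v))$. Summing over $v$ therefore causes that factor to sum to $1$ and commutes with the sum over $\mathbf{U}$, yielding $\sum_V Q[C(V)]=\sum_{\mathbf{U}}\prod_{V_i\in C(V)\setminus\{V\}}P(v_i\mid pa(v_i))P(\mathbf{U})=Q[C(V)\setminus\{V\}]$. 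Substituting into the previous display produces the claimed formula, and since every right-hand quantity is a function of $P_{\mathbf{O}\setminus\mathbf{X}}$, also the asserted identifiability.

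The main obstacle I anticipate is the bookkeeping in this final step: I must argue carefully that the ``no child in the same c-component'' hypothesis is precisely what prevents $v$ from surviving as a conditioning argument, and that the intervention semantics of $Q[C(V)\setminus\{V\}]$ (which additionally \emph{intervenes} on $V$) coincides with the plain marginal $\sum_V Q[C(V)]$ only under this hypothesis. I would also need to verify that the interventional targets on the variables outside $C(V)$ are held fixed consistently across both c-component factorizations, so that the common factors $\prod_{j\neq1}Q[\mathbf{C}_j]$ genuinely cancel rather than merely appearing to.
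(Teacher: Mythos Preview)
The paper does not prove this corollary; it is restated verbatim from \citet{jabercausal2018} (itself a reformulation of results in \citet{tian2002general}) and is used only as a cited building block for presenting the ID algorithm in the appendix. There is therefore no proof in the paper to compare against.

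That said, your argument is correct and is essentially the proof given in Tian's original work. The three moves---factor $Q[\mathbf{X}]$ and $Q[\mathbf{X}\setminus\{V\}]$ by c-components, cancel the shared factors $\prod_{j\neq 1}Q[\mathbf{C}_j]$, and reduce $Q[C(V)\setminus\{V\}]$ to $\sum_V Q[C(V)]$ via the truncated factorization---are exactly how the result is established. Your identification of the crux (that the hypothesis $C(V)\cap ch(V)=\emptyset$ is precisely what keeps $v$ from appearing as a conditioning argument in any surviving factor) is right, and the caveats you flag in the final paragraph are the only places requiring care. One remark that dissolves your main worry: in the canonical semi-Markovian DAG associated with an ADMG, every unobserved variable is an exogenous root, so $V$ has no unobserved children; hence the ADMG-level set $ch(V)$ really does exhaust all places $v$ could appear on the right of a conditioning bar in the truncated factorization, and the marginal $\sum_V Q[C(V)]$ coincides with the interventional $Q[C(V)\setminus\{V\}]$ exactly under the stated hypothesis.
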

This Corollary allows us to derive the post-intervention distribution after intervening on $V$ from the post-intervention distribution after intervening on the variables in $\mathbf{O}\setminus \mathbf{X}$. The modified presentation of Tian's ID algorithm given in \citet{jabercausal2018} is in Algorithm \ref{alg:id}, which computes the identifying functional for the post-interventional distribution of the variables in $\mathbf{Y}$ after intervening on the variables in $\mathbf{X}$ by recursively finding the identifying functional for each c-component in the post-intervention subgraph.

\section{Proofs}
\subsection{Soundness and Completeness of the Surgery Estimator}
\begin{customthm}{1}[Soundness]
When Algorithm 2\ref{alg:surgery} returns an estimator, the estimator is stable.
\end{customthm}
\begin{proof}
Any query Algorithm 2 makes to ID considers intervening on a superset of the mutable variables $\mathbf{X} \supseteq \mathbf{M}$. By Proposition 1 this means the target interventional distribution is stable. From the soundness of the ID algorithm \cite[Theorem 5]{shpitser2006identification}, the resulting functional of observational distributions that Algorithm 2\ref{alg:surgery} returns will be stable.
\end{proof}

\begin{customthm}{2}[Completeness]
If Algorithm 2 fails, then there exists no stable surgery estimator for predicting $T$.
\end{customthm}
\begin{proof}
Algorithm 2 is an exhaustive search over interventional distributions that intervene on supersets of $\mathbf{M}$ and are functions of $T$. Thus, by completeness of the ID algorithm \cite[Corollary 2]{shpitser2006identification}, if there is a stable surgery estimator, the procedure will find one.
\end{proof}

\subsection{Relationship with Graph Pruning}
\begin{customlemma}{1}
Let $T$ be the target variable of prediction and $\mathcal{G}$ be a selection ADMG with selection variables $\mathbf{S}$. If there exists a stable conditioning set $\mathbf{Z}$ such that $P(T|\mathbf{Z}) = P(T|\mathbf{Z},\mathbf{S})$, then Algorithm 2\ref{alg:surgery} will not fail on input $(\mathcal{G}, ch(\mathbf{S}), T)$.
\end{customlemma}
\begin{proof}
Assume that $P(T|\mathbf{Z})$ is a stable graph pruning estimator. Partition $\mathbf{Z}$ into $\mathbf{X}$ and $\mathbf{W}$ such that $\mathbf{X}\subseteq \mathbf{M}$ and $\mathbf{W}\intersect \mathbf{M} = \emptyset$, and let $\mathbf{V} = \mathbf{M}\setminus \mathbf{X}$. It must be that $T \ci \mathbf{X}|\mathbf{W}$ in $\mathcal{G}_{\underline{\mathbf{X}}}$. If this were not the case then there would be some $X \in \mathbf{X}$ such that there was a backdoor path from $T$ to $X$, and since $X \in ch(\mathbf{S})$ there is a path $T \dots \rightarrow X \leftarrow S$. Because $X$ is conditioned upon, this collider path would be active and $S \not \ci T$, implying $P(T|\mathbf{Z})$ is not stable (a contradiction). Now by Rule 2 of do-calculus, $P(T|\mathbf{X},\mathbf{W}) = P_{\mathbf{X}}(T|\mathbf{W})$. Next consider the remaining mutable variables $\mathbf{V}$. Letting $\mathbf{V}(\mathbf{W})$ denote the subset of $\mathbf{V}$ nodes that are not ancestors of any $\mathbf{W}$ nodes in $\mathcal{G}_{\overline{\mathbf{X}}}$, we will show that $T \ci \mathbf{V}|\mathbf{X},\mathbf{W}$ in $\mathcal{G}_{\overline{\mathbf{X}},\overline{\mathbf{V}(\mathbf{W})}}$. First consider $V \in \mathbf{V}(\mathbf{W})$. For the independence to not hold, there must be an active forward path from $V$ to $T$. But because $V \in ch(\mathbf{S})$, the path $S \rightarrow V \rightarrow \dots T$ is active since $V$ is not conditioned upon, implying contradictorily that $P(T|\mathbf{Z})$ was not stable. Now consider $V \in \mathbf{V} \setminus \mathbf{V}(\mathbf{W})$. For the independence to not hold, either there is an active forward path from $V$ to $T$, or there is an active backdoor path from $V$ to $T$. We previously showed the first case. In the second case, because $V$ is an ancestor of some $W\in \mathbf{W}$ that is conditioned upon, the collider path $S \rightarrow V \leftarrow \dots T$ is active, so $P(T|\mathbf{Z})$ is not stable (contradiction). Thus, by Rule 3 of do-calculus, we have that $P_{\mathbf{X}}(T|\mathbf{W}) = P_{\mathbf{M}}(T|\mathbf{W})$. This is one of the conditional interventional queries that Algorithm 2 considers, so Algorithm 2 will not fail.
\end{proof}

\subsection{Optimality}
\begin{customthm}{3}
If $\mathcal{G}$ is such that $P_\mathbf{M}(T|\mathbf{X}\setminus\mathbf{M})$ is identified and equal to $P(T|\mathbf{W})$ for some $\mathbf{W}\subseteq \mathbf{X}$, then $f_s(\mathbf{x})=E[T|\mathbf{x}\setminus\mathbf{m},do(\mathbf{m})]$ achieves (2):
\begin{equation*}
    f_s\in\argmin_{f\in\mathcal{C}^0} \sup_{Q_\mathbf{s}\in\Gamma} E_{Q_\mathbf{s}}[(t-f(\mathbf{x}))^2].
\end{equation*}
\end{customthm}
\begin{proof}
The structure of this proof follows that of Theorem 4 in \citet{rojas2018invariant} which proves the optimality of using invariant conditional distributions to predict in an adversarial setting.

Consider a function $f\in \mathcal{C}^0$, possibly different from $f_s$. Now for each distribution $\mathbb{Q}\in\Gamma$ corresponding to an environment, we will construct a distribution $\mathbb{P}\in\Gamma$ such that
\begin{equation*}
    \int(t-f(\mathbf{x}))^2 d\mathbb{P} \geq \int (t-f_s(\mathbf{x}))^2 d\mathbb{Q}.
\end{equation*}
Denote the density of $\mathbb{Q}$ by $q(\mathbf{x}, t)$. Note that we have assumed that all distributions in $\Gamma$ correspond to the same graph $\mathcal{G}$ in which $P_\mathbf{M}(T|\mathbf{X}\setminus\mathbf{M})=P(T|\mathbf{W})$ for some $\mathbf{W}\subseteq \mathbf{X}$. Because $\mathbb{Q}\in\Gamma$, $q$ factorizes according to (1) as a product of conditional densities (even when bidirected edges are present, the observational joint can be factorized as a product of univariate conditionals using the c-component factorization \citep{tian2002studies}). To construct the density $p(\mathbf{x}, t)$ of $\mathbb{P}$ from $q$, for $M\in\mathbf{M}$ replace the $q(M|\cdot)$ terms with the marginal density $q(M)$. This is equivalent to removing the edges into $\mathbf{M}$ so notably $P(\mathbf{O}\setminus\mathbf{M}|\mathbf{M})=P_\mathbf{M}(\mathbf{O}\setminus\mathbf{M})$ by rule 2 of $do$-calculus. Thus in $\mathbb{P}$ we have that $P_\mathbf{M}(T|\mathbf{X}\setminus\mathbf{M})=P(T|\mathbf{X})$. But since the full conditional interventional distribution is stable it must be that $P(T|\mathbf{X})=P(T|\mathbf{W})$. So, we know that $q(t|w)=p(t|w)$. Further, we have that $p(\mathbf{w})=q(\mathbf{w})$ since we constructed $\mathbb{P}$ to have the same marginals of $\mathbf{M}$ as $\mathbb{Q}$ and the other terms remain stable across members of $\Gamma$. Thus $q(t,\mathbf{w})=p(t,\mathbf{w})$. Letting $\mathbf{Z}=\mathbf{X}\setminus \mathbf{W}$, we note that $\mathbf{T}\ci \mathbf{Z}|\mathbf{W}$ in $\mathbb{P}$. We now have that

\begin{align*}
    \int(t-f(\mathbf{x}))^2 d\mathbb{P} &= \int_{t,\mathbf{x}} (t-f(\mathbf{x}))^2 p(\mathbf{x},t) d\mathbf{x}dt\\
    &\geq \int_{t,\mathbf{x}} (t-E[T|\mathbf{x}])^2 p(\mathbf{x},t) d\mathbf{x}dt \tag{Conditional mean minimizes MSE}\\
    &= \int_{t,\mathbf{x}} (t-E[T|\mathbf{x}\setminus\mathbf{m},do(\mathbf{m})])^2 p(\mathbf{x},t) d\mathbf{x}dt \tag{Conditional and interventional distributions are equal by construction}\\
    &= \int_{t,\mathbf{w}}\int_{\mathbf{z}} (t-f_s(\mathbf{m},\mathbf{x}\setminus\mathbf{m}))^2 p(\mathbf{z}|\mathbf{w})p(\mathbf{w},t) d\mathbf{z}d\mathbf{w}dt\\
    &= \int_{t,\mathbf{w}}\int_{\mathbf{z}} (t-f_s(\mathbf{w}))^2 p(\mathbf{z}|\mathbf{w})p(\mathbf{w},t) d\mathbf{z}d\mathbf{w}dt\tag{$E[T|\mathbf{x}\setminus\mathbf{m},do(\mathbf{m})]=E[T|\mathbf{w}]$}\\
    &= \int_{t,\mathbf{w}}\int_{\mathbf{z}} (t-f_s(\mathbf{w}))^2 p(\mathbf{z}|\mathbf{w})q(\mathbf{w},t) d\mathbf{z}d\mathbf{w}dt\tag{$q(t,\mathbf{w})$ is stable}\\
    &= \int_{t,\mathbf{w}} \int_{\mathbf{z}} (t-f_s(\mathbf{w}))^2 q(\mathbf{z}|t,\mathbf{w})d\mathbf{z} q(\mathbf{w},t) d\mathbf{w}dt \tag{$(t-f_s(\mathbf{w}))^2$ is not a function of $\mathbf{z}$}\\
    &= \int_{\mathbf{z},t,\mathbf{w}} (t-f_s(\mathbf{w}))^2 q(\mathbf{w},t, \mathbf{z})d\mathbf{w}dt d\mathbf{z}\\
    &= \int_{t,\mathbf{x}} (t-f_s(\mathbf{w}))^2 d\mathbb{Q}
\end{align*}
\end{proof}

\begin{customthm}{4}
If $\mathcal{G}$ is such that $P_\mathbf{M}(T|\mathbf{X}\setminus\mathbf{M})$ is identified and not a function of $\mathbf{M}$, then $f_s(\mathbf{x})=E[T|\mathbf{x}\setminus\mathbf{m},do(\mathbf{m})]$ achieves (2):
\begin{equation*}
    f_s\in\argmin_{f\in\mathcal{C}^0} \sup_{Q_\mathbf{s}\in\Gamma} E_{Q_\mathbf{s}}[(t-f(\mathbf{x}))^2].
\end{equation*}
\end{customthm}
\begin{proof}
The structure of this proof closely follows the structure of the previous proof. 

Consider a function $f\in \mathcal{C}^0$, possibly different from $f_s$. Now for each distribution $\mathbb{Q}\in\Gamma$ corresponding to an environment, we will construct a distribution $\mathbb{P}\in\Gamma$ such that
\begin{equation*}
    \int(t-f(\mathbf{x}))^2 d\mathbb{P} \geq \int (t-f_s(\mathbf{x}))^2 d\mathbb{Q}.
\end{equation*}

We shall again construct $\mathbb{P}$ from $\mathbb{Q}$ such that in $\mathbb{P}$ $P(T|\mathbf{X}\setminus\mathbf{M},do(\mathbf{M}))=P(T|\mathbf{X})$. Note that we have assumed that $P(T|\mathbf{X}\setminus\mathbf{M},do(\mathbf{M})))$ is not a function of $\mathbf{M}$. This usually corresponds to a \emph{dormant independence} or \emph{Verma constraint} \citep{shpitser2008dormant} in the graph: it means that $T\ci\mathbf{M}|\mathbf{X}\setminus\mathbf{M}$ in $\mathcal{G}_{\overline{\mathbf{M}}}$ (the graph in which edges into $\mathbf{M}$ have been deleted). Further discussion of this can be found in the next subsection of the supplement.

Let $\mathbf{Z} = \mathbf{X}\setminus\mathbf{M}$. By Proposition 1 we have that $p(t,\mathbf{z}|do(\mathbf{m}))=q(t,\mathbf{z}|do(\mathbf{m})$ where $p$ and $q$ denote the densities of $\mathbb{P}$ and $\mathbb{Q}$, respectively. Note that recovering the joint density $p(t,\mathbf{z},\mathbf{m})$ from $p(t,\mathbf{z}|do(\mathbf{m}))$ requires multiplying by a functional of the observational distribution $\mathbb{P}$ of the form $p'(\mathbf{m}|t,\mathbf{z})$ (that is, a product of \emph{kernels} \citep{richardson2017nested} or conditional-like univariate densities of $\mathbf{m}$) where $p'$ denotes that this is not an observational conditional density.

\begin{align*}
    \int(t-f(\mathbf{x}))^2 d\mathbb{P} &= \int_{t,\mathbf{x}} (t-f(\mathbf{x}))^2 p(\mathbf{x},t) d\mathbf{x}dt\\
    &\geq \int_{t,\mathbf{x}} (t-E[T|\mathbf{x}])^2 p(\mathbf{x},t) d\mathbf{x}dt \tag{Conditional mean minimizes MSE}\\
    &= \int_{t,\mathbf{z},\mathbf{m}} (t-E[T|\mathbf{z},do(\mathbf{m})])^2 p(\mathbf{z},\mathbf{m},t) d\mathbf{z}d\mathbf{m}dt \tag{Conditional and interventional distributions are equal by construction}\\
    &= \int_{t,\mathbf{z},\mathbf{m}} (t-f_s(\mathbf{m}, \mathbf{z}))^2 p(\mathbf{z},\mathbf{m},t) d\mathbf{z}d\mathbf{m}dt\\
    &= \int_{t,\mathbf{z},\mathbf{m}} (t-f_s(\mathbf{m}, \mathbf{z}))^2 p(\mathbf{z},t|do(\mathbf{m}))p'(\mathbf{m}|t,\mathbf{z}) d\mathbf{z}d\mathbf{m}dt\\
    &= \int_{t,\mathbf{z},\mathbf{m}} (t-f_s(\mathbf{m}, \mathbf{z}))^2 q(\mathbf{z},t|do(\mathbf{m}))p'(\mathbf{m}|t,\mathbf{z}) d\mathbf{z}d\mathbf{m}dt \tag{Stability of $q(\mathbf{z},t|do(\mathbf{m}))$ by Prop 1}\\
    &= \int_{t,\mathbf{m},\mathbf{z}} (t-f_s(\mathbf{z}))^2 q(\mathbf{z},t|do(\mathbf{m}))p'(\mathbf{m}|t,\mathbf{z}) d\mathbf{z}d\mathbf{m}dt\tag{$E[T|do(\mathbf{m},\mathbf{z}]$ is not a function of $\mathbf{m}$}\\
    &= \int_{t,\mathbf{m},\mathbf{z}} (t-f_s(\mathbf{z}))^2 q(\mathbf{z},t|do(\mathbf{m}))q'(\mathbf{m}|t,\mathbf{z}) d\mathbf{z}d\mathbf{m}dt\tag{$(t-f_s(\mathbf{z}))^2$ is not a function of $\mathbf{m}$}\\
    &= \int_{t,\mathbf{m},\mathbf{z}} (t-f_s(\mathbf{z}))^2 q(\mathbf{z},t,\mathbf{m})d\mathbf{z}d\mathbf{m}dt\\
    &=\int_{t,\mathbf{x}} (t-f_s(\mathbf{z}))^2 d\mathbb{Q}
\end{align*}

In the above derivation, note that $p(\mathbf{m},t|do(\mathbf{m}))p'(\mathbf{m}|t,\mathbf{z})$ essentially represents a particular grouping of terms whose products equals $p(t,\mathbf{x})$ (e.g., in a DAG without hidden variables both terms would be products of conditionals of the form $p(v|pa(v))$). Since the integrand of the expectation is not a function of $\mathbf{M}$, we have the independence in the intervened graph, and we constructed $\mathbb{P}$ such that there are no backdoor paths from $\mathbf{M}$ to $T$, we can push the associated expectations inwards and replace them with the $q'$ terms that recover the joint density $q$. To see this in the context of a particular graph see the front-door graph section of the supplement.
\end{proof}

\begin{customthm}{5}
The surgery estimator is optimal amongst the set of directly transportable statistical or causal relations for predicting $T$.
\end{customthm}
\begin{proof}
First consider the set of directly transportable statistical relations for predicting $T$. These will be observational distributions (i.e., no $do$ terms) of the form $P(T|\mathbf{Z})$ for $\mathbf{Z}\subseteq\mathbf{X}$. We have already shown that stable conditioning sets correspond to conditional interventional distributions of the form $P(T|\mathbf{W}, do(\mathbf{M})$ in Lemma 1 and Corollary 1. Thus, we only need to consider directly transportable causal relations (stable conditional interventional distributions). However, Algorithm 2 is exactly an exhaustive search over stable conditional interventional distributions that returns the optimal one, thus the surgery estimator is optimal amongst directly transportable statistical and causal relations.
\end{proof}

\subsection{Front-door Graph}
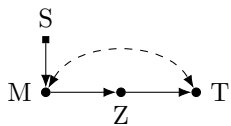
\begin{figure}[!h]
\centering
      \begin{tikzpicture}
          \def\unit{1.0}
          \node (m) at (0, 0) [label=left:M,point];
          \node(z) at (\unit, 0) [label=below:Z, point];
          \node(t) at (2*\unit, 0)[label=right:T, point];
          \node (s) at (0, 0.7*\unit) [label=above:S, square];
          
          \path (m) edge (z);
          \path (z) edge (t);
          \path (s) edge (m);
          \path[bidirected] (m) edge[bend left=60] (t);
        \end{tikzpicture}
\caption{The front-door ADMG.}
\label{fig:front-door}
\end{figure}
Consider the selection ADMG in Fig \ref{fig:front-door}. Notably, for this graph there is no stable graph pruning estimator for predicting $T$. Conditioning on either $\mathbf{Z}$ or $\mathbf{M}$ activates the path $S\rightarrow M \leftrightarrow T$, and conditioning on nothing leaves the path $S\rightarrow M \rightarrow Z \rightarrow T$ active, so there is no stable conditioning set (including the empty set). The full conditional surgery estimator, however, is identified and stable:
\begin{equation*}
    P(T|do(M),Z) = \sum_{m'} P(T|m', Z) P(m').
\end{equation*}
Note that this distribution is not a function of $M$ as it has been marginalized out. This encodes the constraint that $T \ci M | Z$ in $\mathcal{G}_{\overline{M}}$, the graph in which the edges into $M$ are deleted. We see that in the front-door graph, after intervening on $M$ the only relationship between $M$ and $T$ is via the directed chain $M\rightarrow Z\rightarrow T$. Thus $Z$ mediates all of the effect of $M$ on $T$, and the conditional interventional distribution, once computed, is not a function of $M$.

We can use this example to demonstrate how the proof of Theorem 4 works when interventional distributions are different from observational distributions. Now consider a distribution $\mathbb{Q}$ from the family $\Gamma$ that corresponds to this graph. The density factorizes as $q(T,Z,M)=q(T|Z,M)q(Z|M)q(M)$. We will construct a new member of the family $\mathbb{P}$ such that $p(T,Z,M) = p'(T|Z)q(Z|M)q(M)$ where $p'(T|Z)=p'(T|Z,M)=\int_{m'} q(T|Z,m')q(m')dm'$. While the factorization looks different, $\mathbb{P}$ is simply a member of $\Gamma$ that corresponds to the chain without unobserved confounding. Let $f_s(z)=f_s(z,m)=E[T|do(m),z]$ (not a function of $M$). Now consider some function $f(z,m)\in\mathcal{C}^0$:
\begin{align*}
    \int (t-f(z,m))^2 d\mathbb{P} &= \int_{t,z,m} (t-f(z,m))^2 p(t,z,m) dtdzdm\\
    &\geq \int_{t,z,m} (t-E[T|z,m])^2 p'(t|z,m)p(z|m)p(m) dtdzdm\\
    &= \int_{t,z,m} (t-E[T|z,do(m)])^2 p'(t|z)p(z|m)p(m) dtdzdm\\
    &= \int_{t,z,m} (t-f_s(z,m))^2 q(t|z,do(m))q(z|m)q(m) dtdzdm\\
    &= \int_{t,z,m} (t-f_s(z))^2 q(t|z,do(m))q(z|m)q(m) dtdzdm\\
    &= \int_{t,z,m} (t-f_s(z))^2 q(t|z,do(m))q(z)q(m|z) dtdmdz\\
    &= \int_{t,z}(t-f_s(z))^2  q(z) \big(\int_{m'} q(t|z,m')q(m')dm'\big)\big(\int_{m} q(m|z) dm\big) dz dt\\
    &= \int_{t,z}(t-f_s(z))^2  q(z) \big(\int_{m'} q(t|z,m')q(m')dm'\big)\big(\int_{m} q(m|t, z) dm\big) dz dt\\
    &= \int_{t,z, m}(t-f_s(z))^2  q(t, m, z) dm dz dt\\
    &= \int (t-f_s(z))^2 d\mathbb{Q}
\end{align*}

\section{Experiment Details}
\subsection{Hyperparameters for Baselines}
Causal transfer learning (CT) has hyperparameters dictating how much data to use for validation, the significance level, and which hypothesis test to use. In all experiments we set \texttt{valid split} $=0.6$, \texttt{delta}=0.05, and \texttt{use hsic} = \texttt{False} (using HSIC did not improve performance and was much slower).

Anchor regression requires an ``anchor'' variable. In the real data experiment we use season as the anchor. It also has a hyperparameter which dictates the magnitude of perturbation shifts it protects against. We set this to twice the maximum standard deviation of any variable in the training data (including the target).

\subsection{Simulated Experiment}
We generate data from linear Gaussian structural equation models (SEMs) defined by the DAG in Figure 1a:
\begin{align*}
    &K \sim \mathcal{N}(0, \sigma^2)\\
    &T \sim \mathcal{N}(w_1 K, \sigma^2)\\
    &A \sim \mathcal{N}(w_2, \sigma^2)\\
    &C \sim \mathcal{N}(w_3 T + w_4 A, \sigma^2)\\
\end{align*}

We generate the coefficients $w_1,w_2,w_3,w_4 \sim \mathcal{N}(0, 1)$ and take $\sigma^2=0.1^2$.

In simulated experiment 1, $A$ is the mutable variable so across source and target environments we vary the value of $w_2$. Similarly, in experiment 2 (target shift) $T$ is the mutable variable so we vary the value of $w_1$.

We perform both experiments as follows: In each environment we sample 1000 examples. We generate coefficients $w_1,w_2,w_3,w_4 \sim \mathcal{N}(0, 1)$, and take 1000 samples. This is used as the training data for Graph Surgery. Then we generate 1000 samples for each of 9 other randomly generated values of $w_2$ or $w_1$ for experiments 1 and 2, respectively. The 10,000 total samples from 10 environments are used to train the OLS and CT baselines. Then we evaluate on 1000 samples from each of 100 test environments. The $w_2$ (or $w_1$) values are taken from an equally spaced grid. For experiment 1 we consider in $w_2 \in [-100, 100]$ while for experiment 2 we consider $w_1 \in [-10, 10]$. This process is repeated 500 times to yield results on 50,000 test environments.

\begin{figure}[!h]
\begin{center}
\centerline{\includegraphics[scale=0.38]{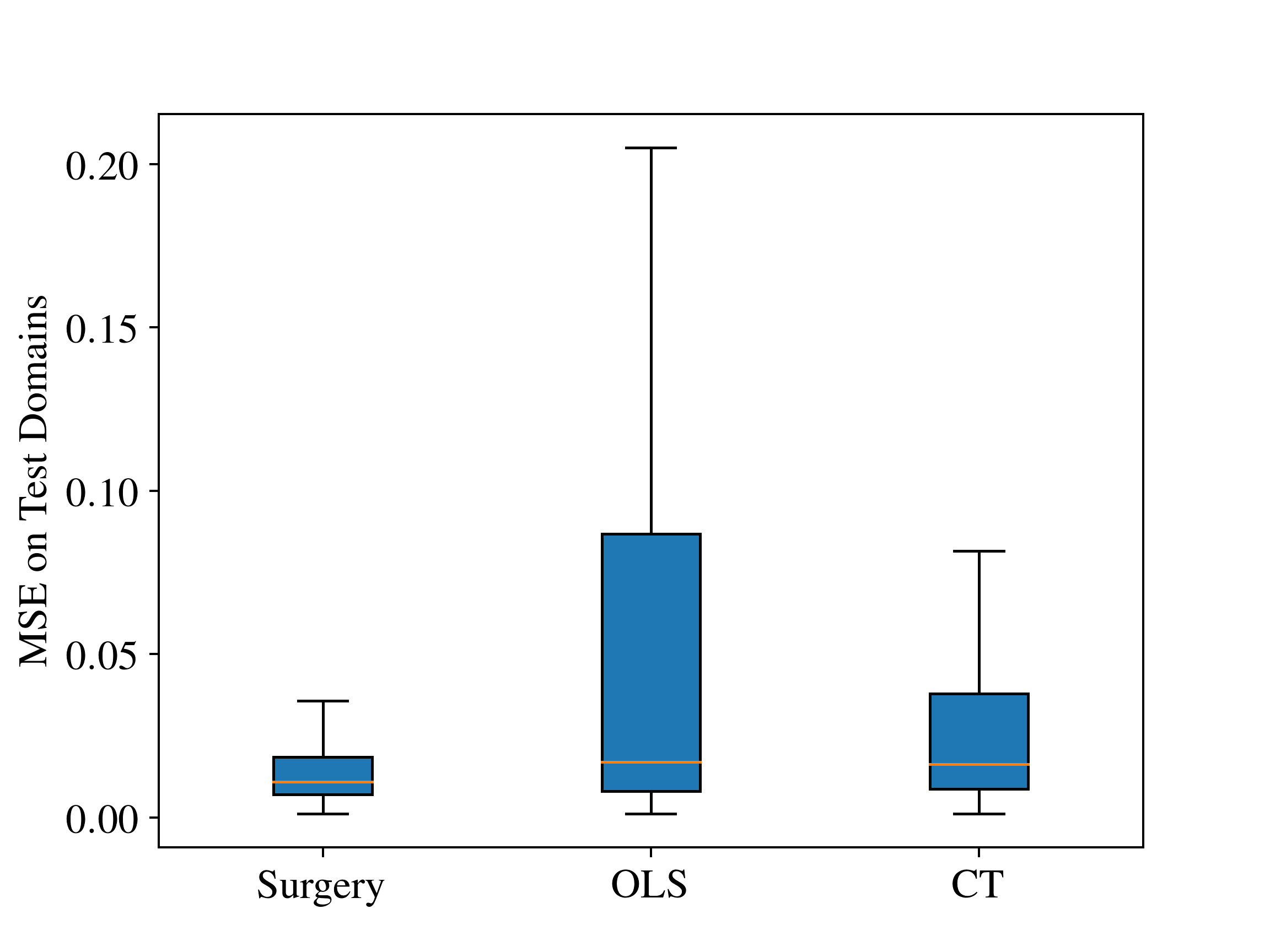}}
\caption{Boxplot of MSE in test environments for the Fig 1a scenario. }
\label{fig:boxplot1}
\end{center}
\end{figure}

The boxplot of the test environment MSEs across the 50,000 test environments for Experiment 1 is shown in Figure \ref{fig:boxplot1}. In this example, Surgery is the only consistently stable model. CT is stable when it selects the empty conditioning set, but in $70\%$ of the 500 runs CT picks all features (i.e., it is equivalent to OLS). We see that the two (at least sometimes) stable methods have much lower variance in performance. Thus, stability implies less variance across environments which is desirable in the proactive transfer setting.

\begin{figure}[!h]
\begin{center}
\centerline{\includegraphics[scale=0.38]{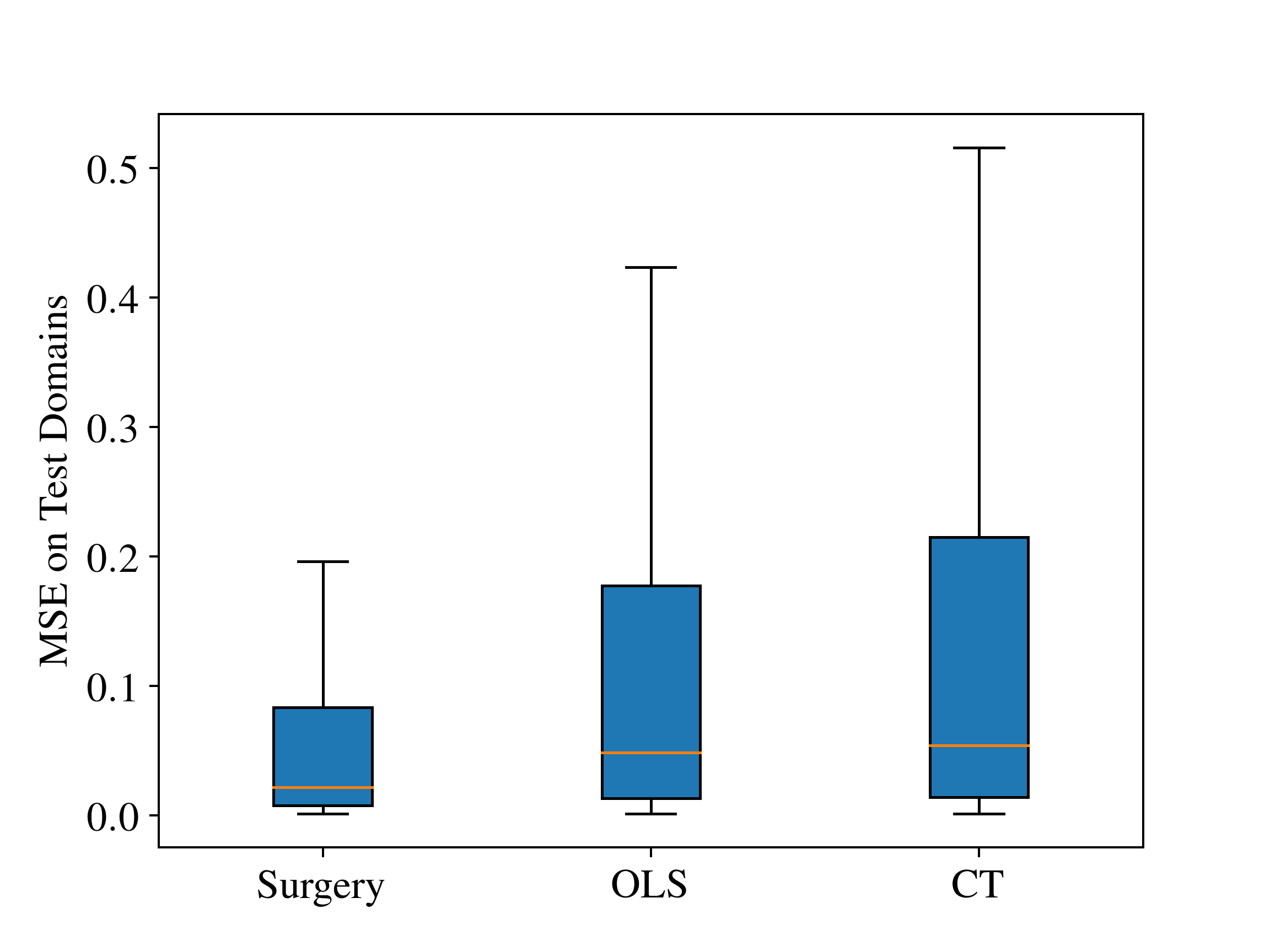}}
\caption{Boxplot of MSE in test environments for the target shift scenario. }
\label{fig:boxplot2}
\end{center}
\end{figure}

The boxplot of the test environment MSEs across the 50,000 test environments for Experiment 2 is shown in Figure \ref{fig:boxplot2}. In this example, Surgery is the only consistently stable model. CT has no stable conditioning set. In $60\%$ of runs CT conditioned on all features. The other times it tended to use the empty set. However, in this experiment $P(T)$ is not stable and uses less information than $P(T|A,C)$ (which OLS models) which is what causes it to have worse performance than OLS. Thus, even in the challenging target shift scenario, graph surgery allows us to estimate a stable model when no stable pruning or conditional model exists.


\end{document}